  \providecommand\BibTeX{{%
    \normalfont B\kern-0.5em{\scshape i\kern-0.25em b}\kern-0.8em\TeX}}}
 \newcommand{\ind}{\perp\!\!\!\!\perp}
\newtheorem{proposition}{Proposition}
\newtheorem*{proposition*}{Proposition 1}
\newtheorem{assumption}{Assumption}
\begin{abstract}
Extracorporeal membrane oxygenation (ECMO) is an essential life-supporting modality for COVID-19 patients who are refractory to conventional therapies. However, the proper treatment decision  has been the subject of significant debate and it remains controversial about who benefits from this scarcely available and technically complex treatment option. To support clinical decisions, it is a critical need to predict the treatment need and the potential treatment and no-treatment responses. Targeting this clinical challenge, we propose Treatment Variational AutoEncoder (TVAE), a novel approach for individualized treatment analysis. TVAE is specifically designed to address the modeling challenges like ECMO with strong treatment selection bias and scarce treatment cases. TVAE conceptualizes the treatment decision as a multi-scale problem. We model a patient's potential treatment assignment and the factual and counterfactual outcomes as part of their intrinsic characteristics that can be represented by a deep latent variable model. The factual and counterfactual prediction errors are alleviated via a reconstruction regularization scheme together with semi-supervision, and the selection bias and the scarcity of treatment cases are mitigated by the disentangled and distribution-matched latent space and the label-balancing generative strategy. We evaluate TVAE on two real-world COVID-19 datasets: an international dataset collected from 1651 hospitals across 63 countries, and a institutional dataset collected from 15 hospitals. The results show that TVAE outperforms state-of-the-art treatment effect models in predicting both the propensity scores and factual outcomes on heterogeneous COVID-19 datasets. Additional experiments also show TVAE outperforms the best existing models in individual treatment effect estimation on the synthesized IHDP benchmark dataset.
\end{abstract}
\keywords{Machine Learning for Healthcare, Causal Inference, Representation Learning, Semi-supervised Learning, Treatment Effect Estimation, COVID Analysis, Generative AI, Deep Latent Variable Models, Variational Autoencoder}
\begin{document}
\title{Assisting Clinical Decisions for Scarcely Available Treatment via Disentangled Latent Representation}
\author{Bing Xue}
\orcid{0000-0002-9162-098X}
\affiliation{%
  \institution{McKelvey School of Engineering\\Washington University in St. Louis}
  \streetaddress{1 Brookings Drive}
 \city{St. Louis}
 \state{Missouri}
 \country{USA}
  \postcode{63130}
}
\email{xuebing1234@gmail.com}

\author{Ahmed Sameh Said}
\affiliation{%
  \institution{School of Medicine\\Washington University in St. Louis}
 \city{St. Louis}
 \state{Missouri}
 \country{USA}
}
\email{said\_a@wustl.edu}

\author{Ziqi Xu}
\affiliation{%
  \institution{McKelvey School of Engineering\\Washington University in St. Louis}
 \city{St. Louis}
 \state{Missouri}
 \country{USA}
}
\email{ziqixu@wustl.edu}

\author{Hanyang Liu}
\affiliation{%
  \institution{McKelvey School of Engineering\\Washington University in St. Louis}
 \city{St. Louis}
 \state{Missouri}
 \country{USA}
}
\email{hanyang.liu@wustl.edu}

\author{Neel Shah}
\affiliation{%
  \institution{School of Medicine\\Washington University in St. Louis}
 \city{St. Louis}
 \state{Missouri}
 \country{USA}
}
\email{neel.shah@wustl.edu}

\author{Hanqing Yang}
\affiliation{%
  \institution{McKelvey School of Engineering\\Washington University in St. Louis}
 \city{St. Louis}
 \state{Missouri}
 \country{USA}
}
\email{alberty@wustl.edu}

\author{Philip Payne}
\affiliation{%
  \institution{School of Medicine\\Washington University in St. Louis}
 \city{St. Louis}
 \state{Missouri}
 \country{USA}
}
\email{prpayne@wustl.edu}

\author{Chenyang Lu}
\authornote{Corresponding author.}
\affiliation{%
  \institution{McKelvey School of Engineering \\ School of Medicine \\Washington University in St. Louis}
  \streetaddress{1 Brookings Drive}
 \city{St. Louis}
 \state{Missouri}
 \country{USA}
  \postcode{63130}
}
\email{lu@cse.wustl.edu}

\renewcommand{\shortauthors}{Bing Xue et al.}

\maketitle
\section{Introduction}

The severe acute respiratory syndrome coronavirus-2 (SARS-CoV-2) and the associated COVID-19 pandemic have created a substantial and unforeseen burden on the global healthcare system \cite{abrams2020ecmo}. With a global mortality of over 6.8 million (as of Feb 2023), there is considerable focus on therapeutic solutions for patients with most severe manifestations of the disease. In many cases, scarce treatment options need to be considered and evaluated to support patients' lives. In particular, World Health Organization recommends extracorporeal membrane oxygenation (ECMO) for patients who are refractory to conventional therapies, a support modality only available in expert centres with sufficient experience \cite{world2021covid}. As such treatments are technically complex and resource-intensive with difficulty in predicting outcomes, proper treatment evaluation and assignment for ECMO has been the subject of significant debate since the start of the pandemic \cite{falcoz2020extracorporeal,haiduc2020role}. 
Recent reports point to the vitalness of ECMO support over 14,000 reported COVID-19 patients with an overall hospital mortality of approximately 47\% \cite{ELSO}. In comparison, nearly 90\% who couldn't find a spot at an ECMO center died, and these patients were young and previously healthy, with a median age of 40 \cite{gannon2022association}. In addition to the vitalness, demand for ECMO far exceeded its availability leading to numerous patients waiting for ECMO support. To date, patient triage and ECMO resource allocation have been limited to the use of Intensive Care Unit (ICU) illness markers and markers of severe medically refractory respiratory failure, neither of which has been validated to predict patients who would ultimately benefit from this resource-intensive, high-risk therapy \cite{salyer2021first,shekar2020extracorporeal}. 
These gaps in knowledge highlight the need to develop clinically applicable predictive models to assist clinicians in identifying patients most likely to benefit from ECMO support and evaluating the treatment effect thus aiding in patient triage and the necessary resource allocation \cite{salyer2021first,huang2021role}.

From the perspective of treatment effect analysis, treatment assignment indicates whether a patient received the ECMO treatment, “factual outcomes” corresponds to patient’s discharge status (i.e. whether the patient survived/died), and patient’s features (or coviatates) are the electronic health records (EHR) before ECMO initiation. To support ECMO treatment decisions, we need to make two-side estimations. First, we need to model the probability of getting treatment for each patient (propensity scoring), to reflect the underlying treatment assignment policy \cite{rubin} as well as the associated risk consideration, as ECMO itself can lead patients to death. Second, we need to estimate the impact of each treatment decision, calculated by the survival/death difference with and without treatment.

Developing an ECMO decision-assistant model differs from typical supervised machine learning problems  or standard treatment effect problems in healthcare. Compared with a supervised clinical problem \cite{xue2021use,abraham2023integrating,jiao2022continuous,li2022self,shi2018obstructive,xue2022distance,xue2022validation,liu2022predicting}, the entire vector of treatment effects can never be obtained, but only the factual outcomes aligned with the individualized treatment assignments. Compared with a typical treatment effect problem \cite{spirtes2009tutorial,rubin}, it faces the challenges of strong selection bias, scarcity of treatment cases and curse of dimensionality. 
First, unlike randomized controlled trials or many common datasets, ECMO prediction is prone to strong selection bias. ECMO is only applied to high-risk patients with severe symptoms, when few life-supporting alternatives are available. As a result, the features characterizing severity are significantly different from those among non-ECMO-treated patients (referred hereafter as control patients). For treatment effect models that apply a supervised learning framework for each treatment option separately, the learned models would not generalize well to the entire population. Second, the technical complexity and resource intensiveness of such treatment limit the number of ECMO assignments, resulting in a much smaller cohort size when compared to control patients.  In fact, a retrospective study shows that fewer than 0.7\% of critically-illed COVID-19 patients received ECMO treatment\cite{bertini2021ecmo}. Moreover, the EHR dataset contains hundreds types of measurements/lab tests, and only a small subset of these measurements/tests would be conducted to each patient. Due to the limited cohort size in ECMO patients, it is challenging for most machine learning models to overcome the curse of dimensionality without falling into over-fitting. Instead of directly capturing the relationship between the prediction tasks and these partially-observed high-dimensional input features, a lower-dimensional representation of inputs is desired \cite{xuebingkdd}.  

In this paper, we tackle these challenges and propose \textit{Treatment Variational AutoEncoder (TVAE)}, a novel approach that uses a disentangled and balanced latent representation to infer a subject’s potential (factual and counterfactual) outcomes and treatment assignment. It leverages the recent advances in representation learning, and extends the capability of deep generative models in the following aspects:
\begin{itemize}
\item \textbf{Treatment Joint Inference}: The lower-dimensional latent representation of TVAE is \textit{semi-supervised} by treatment assignment and factual response. This architecture eliminates the need of auxiliary networks for prediction, and regulates the counterfactual prediction.  
\item \textbf{Distribution Balancing}: To generate an accurate latent representation, the selection bias is delicately disentangled from other latent dimensions to facilitate treatment assignment prediction and maximize the information sharing between two groups. 
\item \textbf{Label Balancing}: Utilizing the generative function of TVAE, we create \textit{fake} ECMO cases from the posterior distribution of \textit{real} ECMO cases, hence addressing the data imbalance and over-fitting without perturbing the patient distribution. 
\end{itemize}

  Our proposed TVAE outperforms state-of-the-art treatment effect models in predicting ECMO treatment assignment as well as factual responses (with or without ECMO), validated by two large real-world COVID-19 datasets consisting of an international dataset including 118,801 Intensive Care Units (ICU) patients from 1651 hospitals and a institutional dataset including 6,016 ICU patients from 15 hospitals. It also achieves the best performance in estimating individual treatment effects using the public IHDP dataset. While this work is motivated by and evaluated in the context of ECMO treatment, the proposed approach may be generalized for other treatment estimation tasks facing selection bias, label imbalance, and curse of dimensionality.

\section{Related Work}

\subsection{ECMO Treatment Prediction} 
To date, there exist a substantial gap between existing studies and ECMO treatment analysis. Most studies are limited to ECMO mortality scoring systems \cite{pappalardo2013predicting, schmidt2014predicting, schmidt2013preserve, shah2023validation} that rely on identifying pre-ECMO variables, which are available and validated only from patients already supported on ECMO. As such, none of these scores have utilized an appropriate matched non-ECMO cohort, rendering them incapable of identifying the  patients who should receive ECMO support \cite{shah2021extracorporeal,short2022extracorporeal}. A recent study uses gradient boosting trees to predict ECMO treatment assignment \cite{xue2022ecmo}, but it is not designed to predict treatment effect in terms of factual and counterfactual outcomes, which is an important contribution of this work. 

\subsection{Individual Treatment Effect} 
To predict each individual's treatment assignment and treatment effects, the most intuitive approach is to build separate single-task predictors for propensity (probability of receiving ECMO treatment), treatment outcome and control outcome (i.e., survival or death with/without ECMO treatment), respectively  \cite{alaa2017dcnpd,johansson2016bnn,shalit2017cfrnet,chipman2010bart}. Such models are generalized as meta-learners in recent literature \cite{curth2021meta, curth2021st}. These models are prone to the strong selection bias and label imbalance in ECMO treatment assignment, as each treatment-outcome predictor is only exposed to a specific patient group, but not the whole population.

 Another popular approach reported in the literature is by adapting non-parametric models for individual level treatment effect. The simplest version is the k nearest neighbors model, and more advanced models are adapted from tree-based ensemble models. Causal Forest has been developed from random forest to obtain a consistent estimator with semi-parametric asymptotic convergence rate \cite{wager2018cf}. Bayesian Additive Regression Trees (BART)-based methods have been proposed to build the trees with the regularization prior and the backfitting Markov chain Monte Carlo (MCMC) algorithm \cite{chipman2010bart, hahn2020bayesian}. Compared to neural network-based solutions, it is hard for such models to build and regularize the patients representations, hence the curse of dimensionality in characterizing scarce treatment group remains a challenge. 
 
 \subsubsection{Representation Learning} 
 Our proposed TVAE is related to earlier works using deep representation learning for treatment effect estimation. To build a balanced representation between treatment and control groups, various strategies are adopted to force information sharing between the groups. An popular strategy is to remove the selection bias from the representations of treatment and control groups, hence the representations of both groups are similar. Such representation can be from shared layers (such as SNet \cite{curth2021st}, DCN-PD \cite{alaa2017dcnpd}, Dragonnet \cite{shi2019adapting}, TARNET \cite{shalit2017tarnet}, BNN \cite{johansson2016bnn}) or separate networks (e.g., TNet \cite{curth2021st}). In contrast, TVAE acknowledges the strong selection bias in ECMO data hence does not force the representations of two groups to be the same. Instead, TVAE \textit{disentangles} the representation of patients into different aspects (or latent dimensions), and extracts the biased aspect to a designated latent dimension. By doing so, the remaining latent dimensions are naturally "balanced" as both groups share the similar information in these aspects. Considering the rareness of ECMO treatment events, TVAE further avoids the potential over-fitting by maximizing the cross-group similarities in these remaining latent dimensions. 
 
 \subsubsection{Deep Generative Models} 
 Another direction to estimate treatment effects is through deep generative models. CEVAE \cite{louizos2017cevae} and IntactVAE \cite{wu2021intact} adapt the Variational Autoencoder (VAE) to transform the representation of all patients into a common latent space, and build auxiliary networks to predict factual and counterfactual outcomes. GANITE, on the other hand, learn the counterfactual
distributions instead of conditional expected values \cite{yoon2018ganite}. 
As a VAE-based framework, TVAE possesses the salient properties of the abovementioned works, but in a different fashion. TVAE has a re-organized latent space that encodes patients' characteristics by explicitly expressing the predicted distributions of treatment assignment as well as factual and counterfactual outcomes. This eliminates the extra complexity of adding auxiliary networks and its potential insufficient training (as each auxiliary network for treatment outcome is only trained by a subset of data).  Such re-organized latent space further regulates the counterfactual outcomes through the clustering effect as well as input reconstruction. Moreover, TVAE leverages its generative feature to tackle the data imbalance in ECMO prediction: it augments ECMO cases by upsampling from their latent distributions, hence achieving label balance in training iterations.

\begin{figure*}[tbh!] 
    \centering
    \includegraphics[width=0.9\textwidth]{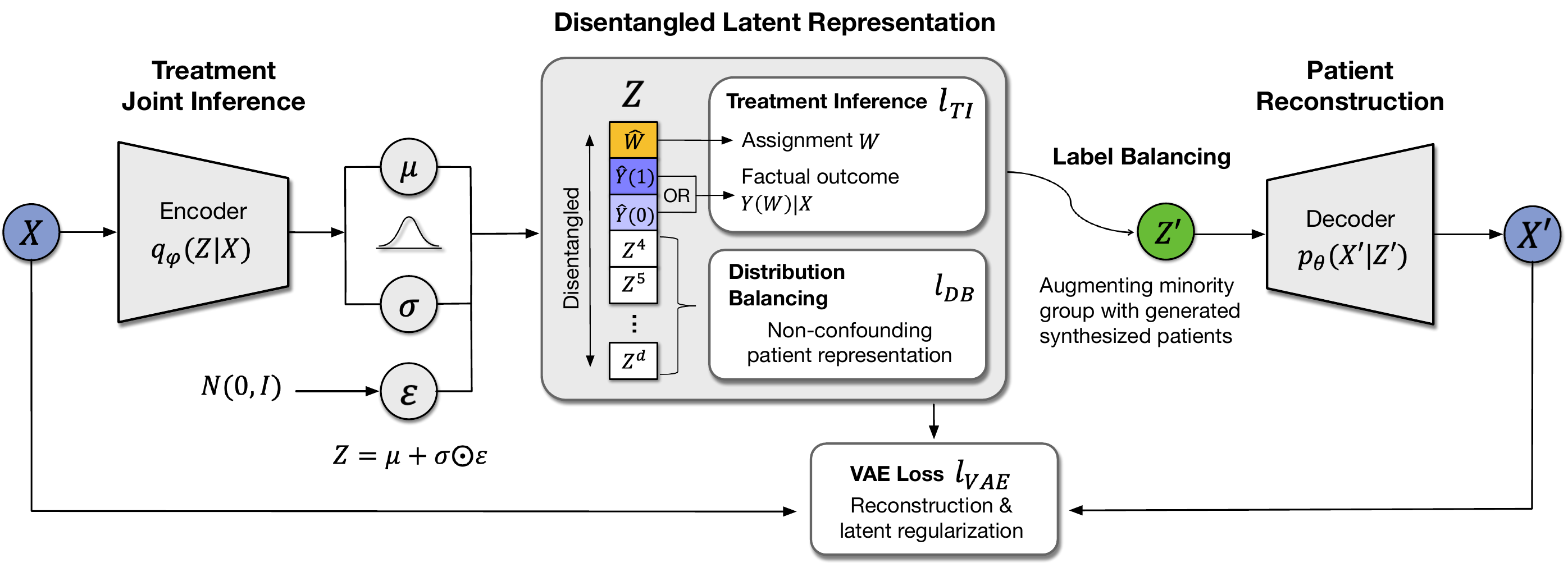}
    \caption{Overview of TVAE. 1)  the Treatment Joint Inference encodes input $X$ into a latent space with direct prediction of treatment assignment and potential outcomes; 2) Distribution Balancing module disentangles the latent space and balances the nonconfounding latent dimensions between treatment and control groups; 3) the Label Balancing module upsamples the under-represented minority group from the learned latent distribution; 4) the upsampled \textit{fake} data are merged with the original data as the updated training inputs for the next iteration. }
\end{figure*}

\section{Problem Formulation}
Throughout this paper, we adopt Rubin’s potential outcomes model \cite{rubin} and consider the population of COVID-19 subjects where each subject $i$ is associated with a $p$-dimensional feature $X_i \in X \subseteq \mathbb{R}^p$, a binary treatment assignment indicator  $W_i \in \{0,1\}$, and two potential outcomes $Y_i(1)$, $Y_i(0)\in \{0,1\}$ drawn from a Bernoulli distribution $
    (Y_i(1), Y_i(0))|X_i \sim P(.|X_i)
$. For an observational dataset $D$ comprising $n$ independent samples of
the tuple $\{X_i, W_i, Y_i(W_i)\}$, where $Y_i(W_i)$ and $Y_i(1-W_i)$ are the factual and the counterfactual outcomes, respectively, we are interested in the probability of treatment assignment (propensity score) $p(x) = P(W_i = 1|X_i)$, the potential outcome with treatment $\mathbb{E} [Y_i(1) |X_i ]$ and the potential outcome without treatment $\mathbb{E} [Y_i(0) |X_i ]$. As the treatment outcomes are binary (survival or death), the treatment is \textit{"impactful"} only if treatment leads to a change from death to survival. In a more generalized setting, a \textit{proxy} of treatment impact is always used, which is the reduction in mortality risk (the individualized treatment effect, or ITE) $ T(X_i) = \mathbb{E} [Y_i(1) - Y_i(0) |X_i ]$ \cite{rubin}. 
 As the counterfactual outcome, $Y_i(1-W_i)$, can never be observed in practice, direct test-set evaluation of the treatment effect is impossible. Existing counterfactual estimation methods usually make the following important assumptions:

\begin{assumption}[No Unmeasured Confounding]
Given $X$, the outcome variables $Y_0$ and $Y_1$
are independent of treatment assignment, i.e., ($Y_0$, $Y_1$) $\ind W |X$.
\end{assumption}

\begin{assumption}[Positivity]
For any covariates $X_i$, the probability to receive/not receive treatment is positive, i.e., $0 < P(W = w|X = X_i)$ $< 1, \forall w$ and $i$.
\end{assumption}

The first assumption comes from the fact that every candidate patient is continuously measured in various aspects that might be relevant to treatment assignment and potential treatment outcomes, and clinicians rely on these measures to make reasonable treatment decisions. As all such measurements/tests are captured in the EHR dataset, we believe the dataset has included all confounding variables. The second assumption holds because only potential treatment candidates are included in this study, and no patients \textit{must} receive ECMO treatment in real clinical scenario. 
More discussions on two assumptions are attached in Appendix A.

\section{TVAE}

Our proposed TVAE framework builds upon a VAE architecture to transform the high-dimensional inputs into a lower-dimensional latent representation. A VAE jointly trains a decoder network (parameterized with $\theta$) with an encoder (parameterized with $\phi$) to recover the original inputs $X$ from the latent encoding $Z$ while regularizing the learned latent space to be close to the prior distribution. For a vanilla VAE, the loss function of training the encoder-decoder network can be written as:
\begin{equation}
\label{eq:VAE}
    \begin{split}
    l_{\text{VAE}}(\phi,\theta) &= \sum_{X_i \in \mathcal{X}} -\mathbb{E}_{Z_i\sim q_{\phi}(Z_i|X_i) } \Bigl[ \log p_{\theta} (X_i|Z_i)\Bigr]  \\
    &+ KL\Bigl(q_{\phi}(Z|X) || p(Z)\Bigr) 
    \end{split}
\end{equation}
where $p_{\phi}(Z|X)$ and $q_{\theta}(X|Z)$ are the learned approximation of the posterior and likelihood distributions, and $p(Z)$ is the prior assumption. This loss function consists of two parts: a reconstruction term and a Kullback–Leibler (KL) divergence regularizer. The former loss maximizes the recovery of the inputs, hence the latent encoding must be a  truthful representation of patients. The latter helps learn an approximation to the true underlying characteristics of the patient data and produce a compact, smooth and meaningful latent space, which can make the learned latent representations easier to use for downstream tasks such as clustering and data generation~\cite{xuebingkdd}.   

In our proposed TVAE, we design a customized encoder, Treatment Joint Inference (Section \ref{sec:TI}), to simultaneously make inferences on both treatment assignment decision and treatment effect (by predicting both factual and counterfactual outcomes).
We further propose two other schemes for TVAE, Distribution Balancing (Section \ref{sec:DB}) and Label Balancing (Section \ref{sec:LB}), to tackle the challenges of selection bias in treatment effect estimation and label imbalance in ECMO assignment prediction, respectively.
Figure 1 shows an overview of our proposed TVAE framework.

\subsection{Treatment Joint Inference}
\label{sec:TI}
Unlike the previous ITE approaches~\cite{curth2021st,shalit2017cfrnet,johansson2016bnn,alaa2017dcnpd,shi2019adapting}, we aim to simultaneously make inferences on treatment assignment and treatment effect (by predicting both factual and counterfactual outcomes) based on a compact neural structure without any auxiliary predictor network. In TVAE, this is achieved by assigning specific latent dimensions in the latent representation $Z$ as the estimate of the treatment assignment $W$, and the observed treatment outcome $Y(1)$ or control outcome $Y(0)$.
Intuitively, irrelevant to the actual assignment (Assumption 1), both the factual and counterfactual treatment outcomes are true reflection of the patient physical status, and therefore can be naturally considered as part of the patient representation $Z$.

Given the latent representation produced by the variational encoder, $Z=(Z^1, Z^2, ..., Z^d)\in\mathbb{R}^d$, let the first three dimensions, $Z^1, Z^2$ and $Z^3$, be encoded to estimate the treatment assginment (i.e., $Z^1=\hat{W}$), treatment outcome (i.e., $Z^2=\hat{Y}(1)$), and control outcome (i.e., $Z^3=\hat{Y}(0)$), respectively
. Then the estimated assignment $\hat{W}=Z^1$ and the factual outcome $\hat{Y}(W)=Z^{3-W}$ (where $W\in\{0,1\}$) is supervised by minimizing the following loss:
\begin{align}
\label{eq:TI}
    l_{\text{TI}}(\phi,\theta) = \sum_{X_i \in \mathcal{X}} -\mathbb{E}_{Z_i^1, Z_i^{3-W}\sim q_{\phi}} \Bigl[
    \log p(W_i,Y_i(W_i)|Z_i^1, Z_i^{3-W})\Bigr]
\end{align}
where $W_i$ and $Y(W_i)$ are the true treatment assignment and factual outcome.
For binary treatment outcomes, such as survival/death for ECMO data, cross entropy is used to implement Eq. (\ref{eq:TI}). 
For continuous treatment outcomes, such as the semi-simulated IHDP dataset (Section 5.4), mean square error is used.

The joint inference encoding allows us to make use of the label information (factual outcomes and treatment assignment) to optimize the encoder.
Similar idea of incorporating supervised information into VAE can be found in prior work such as conditional VAE (CVAE)~\cite{NIPS2015_8d55a249} and CEVAE~\cite{louizos2017cevae}.
However, they all rely on an auxiliary network for label prediction. Moreover, as part of the latent representation, the label inference is also directly regularized by the unsupervised data reconstruction.

Due to the absence of counterfactual ground truth $Y(1-W)$, it's impossible to directly learn the encoded dimension $Z^{3-(1-W)}$ in a supervised fashion as in Eq. (\ref{eq:TI}) above. However, with our TVAE,
the estimate of counterfactual outcomes can be optimized through a semi-supervised process by jointly optimizing $l_{\text{VAE}}(\phi,\theta)$ in Eq. (\ref{eq:VAE}) and $l_{\text{TI}}(\phi,\theta)$ in Eq. (\ref{eq:TI}). 
On one hand, the learning of $\hat{Y}(1-W)$ is regularized by the patient reconstruction process in $l_{\text{VAE}}(\phi,\theta)$. To minimize reconstruction error, $Z^{3-(1-W)}$  must be a truthful representation of $X$.
On the other hand, the semi-supervised setting of TVAE helps the latent encoding \textit{"share"} outcome information across similar patients.
More concretely, similar patients are clustered close to each other in a well-learned smooth and compact latent space of VAE, the counterfactual outcome of a patient (e.g., treatment outcome of a control patient) can be inferred using the factual outcomes of similar patients with a different treatment assignment in the latent neighborhood . 

\subsection{Distribution Balancing}
\label{sec:DB}
\subsubsection{Disentangled Latent Representation}
Previous studies maximize the distribution similarity in control and treatment groups, therefore selection bias is removed and the representation of the scarce treatment group can be regularized \cite{johansson2016bnn,yao2018site,shalit2017cfrnet} by the control group. However, 
instead of removing the selection bias from patients' representations, we argue that it should be utilized to enhance propensity scoring. This can be achieved by disentanglement \cite{chen2018isolating,xuebingkdd} together with the Treatment Joint Inference. By enforcing disentanglement in the latent space while jointly encoding the predicted outcomes ($W$, $Y(W)$) in the designated latent dimensions, the selection bias flows into $Z^1$ for treatment assignment prediction, and naturally the remaining latent dimensions are balanced between treatment and control group. To show this, consider the disentanglement through mini \textit{Total Correlation} (TC) in the $d$-dimensional latent space ~\cite{chen2018isolating},
where the TC of the set of variables, $Z^{1:d}=\{Z^1, Z^2, ..., Z^d\}$, is defined as the ratio between the joint distribution and the product of the marginals, i.e., 
$TC(Z^{1:d}|X) := KL\bigl(q_{{\phi}}(Z|X) \bigl|\bigr|  \prod_{j=1}^{d} q_{{\phi}}(Z^j|X)\bigr)$.

\begin{proposition}
    Given $Z^1 = \hat{W}$, conditioned on the data $X$, the total correlation of $Z^{1:d}$ equals the sum of the total correlation of $Z^{2:d}$ and the mutual information between the first dimension $\hat{W}$ and all the other dimensions $Z^{2:d}$, i.e., 
\begin{equation}
        TC(Z^{1:d}|X) = TC(Z^{2:d}|X) 
    + I(\hat{W}|X;Z^{2:d}|X)
\end{equation}
where $I(A;B)$ is the mutual information between variable $A$ and $B$.
\end{proposition}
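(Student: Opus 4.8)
The plan is to expand both total correlations using the definition $TC(Z^{1:d}\mid X) = KL\bigl(q_\phi(Z\mid X)\bigl\|\bigr.\prod_{j=1}^d q_\phi(Z^j\mid X)\bigr)$ and rewrite everything as expectations of logarithms under $q_\phi(Z\mid X)$. Writing $q(Z)$ for $q_\phi(Z\mid X)$ and $q(Z^j)$ for the marginals $q_\phi(Z^j\mid X)$ (all conditioned on $X$), the left-hand side is $\mathbb{E}_{q(Z)}\bigl[\log q(Z) - \sum_{j=1}^d \log q(Z^j)\bigr]$. The key algebraic move is to split the joint density via the chain rule, $q(Z^{1:d}) = q(Z^1)\,q(Z^{2:d}\mid Z^1)$, or equivalently to insert and subtract $\log q(Z^{2:d})$, the marginal density of the block $Z^{2:d}$.

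**First I would** write $\log q(Z^{1:d}) - \sum_{j=1}^d \log q(Z^j) = \bigl[\log q(Z^{2:d}) - \sum_{j=2}^d \log q(Z^j)\bigr] + \bigl[\log q(Z^{1:d}) - \log q(Z^1) - \log q(Z^{2:d})\bigr]$. Taking $\mathbb{E}_{q(Z)}$ of the first bracket gives exactly $TC(Z^{2:d}\mid X)$, since the integrand depends only on $Z^{2:d}$ and the marginal of $q(Z)$ onto those coordinates is $q(Z^{2:d})$. Taking $\mathbb{E}_{q(Z)}$ of the second bracket gives $\mathbb{E}_{q(Z)}\bigl[\log \tfrac{q(Z^1,Z^{2:d})}{q(Z^1)\,q(Z^{2:d})}\bigr]$, which is precisely the mutual information $I(Z^1;Z^{2:d}\mid X) = I(\hat W; Z^{2:d}\mid X)$ by definition. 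Adding the two pieces recovers the claimed identity. The substitution $Z^1 = \hat W$ is purely notational and just relabels the first coordinate.

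**The main thing to be careful about** is not really an obstacle but a bookkeeping point: one must check that each expectation over the full $q(Z\mid X)$ correctly collapses to an expectation over the relevant marginal, i.e., that $\mathbb{E}_{q(Z^{1:d})}[f(Z^{2:d})] = \mathbb{E}_{q(Z^{2:d})}[f(Z^{2:d})]$ and likewise that terms of the form $\mathbb{E}_{q(Z)}[\log q(Z^j)]$ reduce to integrals against $q(Z^j)$. This is just Fubini/marginalization and requires only that the densities exist and the integrals are finite, which is implicit in writing the KL divergences at all. I would also note explicitly that the identity is the conditional (on $X$) analogue of the standard chain-rule decomposition of total correlation, $TC(Z^{1:d}) = TC(Z^{2:d}) + I(Z^1; Z^{2:d})$, so no assumption on the form of $q_\phi$ (Gaussian, etc.) is needed — it holds for any valid conditional density. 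The whole argument is a few lines once the logarithm of the density ratio is split the right way; there is no genuinely hard step.
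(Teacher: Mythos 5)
Your proof is correct and follows essentially the same route as the paper's: both split $\log\frac{q_\phi(Z^{1:d}\mid X)}{\prod_j q_\phi(Z^j\mid X)}$ into a term yielding $TC(Z^{2:d}\mid X)$ plus the log-ratio $\log\frac{q_\phi(Z^{1:d}\mid X)}{q_\phi(Z^1\mid X)\,q_\phi(Z^{2:d}\mid X)}$ whose expectation is $I(Z^1;Z^{2:d}\mid X)$. Your bookkeeping is in fact slightly cleaner, since the paper's first displayed fraction carries a spurious extra factor of $q_\phi(Z^1\mid X)$ in the denominator (evidently a typo), whereas your split of the sum at $j=2$ makes the two pieces add up exactly.
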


The proof can be found in Appendix C. As the Joint Encoding forces $Z^1$ to approximate $W$, clearly TC is minimized when $I(\hat{W};Z^{2:d}|X)$ is minimized. Alternatively, we can see this from the fact that all terms on both sides are nonnegative, hence the minimization is reached only if the mutual information between $Z^1$ and other latent dimensions is 0. 
Hence, the remaining latent dimensions neither contain any treatment assignment information, nor are they affected by the selection bias. Due to the curse of dimensionality, the latent representation of treatment group using a deep encoder easily overfits and become poorly generalizable. Note that, however, we can use the learned representation in the control group to \textit{guide} the learned representation in the treatment group. The idea is as follows: when a latent dimension only preserves the non-confounding information, then the distribution is irrelevant to treatment assignment. For example, as gender is not considered in treatment assignment, the latent encoding for gender should be distributed indifferently in treatment and no-treatment groups. This can be expressed as
\begin{proposition} For any dimension
$ Z^j$ in the latent representation $ Z:=[Z^1,Z^2,...,Z^d]$,
$ Dist(Z^j(0))=Dist(Z^j(1))$, if and only if $
    Z^j\ind (Y(0),Y(1),W)|X 
$, where $Dist(.)$ denotes the distribution. 
\end{proposition}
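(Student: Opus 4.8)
This is an ``if and only if'' statement, so the plan is to fix notation, reduce it to a statement about conditional independence, and prove the two implications separately. I read $Z^j(w)$ as the law of the $j$-th latent coordinate on the subgroup $\{W=w\}$, so that ``$Dist(Z^j(0))=Dist(Z^j(1))$'' means exactly that the conditional law $p(Z^j|X,W)$ does not depend on the value of $W$, i.e.\ $Z^j\ind W|X$. Under this reading the proposition becomes: $Z^j\ind W|X$ if and only if $Z^j\ind (Y(0),Y(1),W)|X$. One direction is then immediate: by the decomposition property of conditional independence, $Z^j\ind (Y(0),Y(1),W)|X$ lets us drop $Y(0),Y(1)$ from the conditioned tuple, yielding $Z^j\ind W|X$ and hence $p(Z^j|X,W=0)=p(Z^j|X)=p(Z^j|X,W=1)$, which is the equality of the arm-specific laws.

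The other direction is the substantive one. Assuming the arm-specific laws coincide, equivalently $Z^j\ind W|X$, I want to upgrade this to $Z^j\ind (Y(0),Y(1),W)|X$. My plan has two steps. First, use Assumption~1, $(Y(0),Y(1))\ind W|X$, together with the TVAE generative structure---a (disentangled) prior over $Z^{1:d}$ followed by the decoder $X|Z$, with $Z^1,Z^2,Z^3$ encoding $W,Y(1),Y(0)$---to establish $Z^j\ind (Y(0),Y(1))|(W,X)$, reading this off by $d$-separation in the generative graph. Second, combine $Z^j\ind W|X$ with $Z^j\ind (Y(0),Y(1))|(W,X)$ via the contraction property of conditional independence to obtain $Z^j\ind (W,Y(0),Y(1))|X$, which is the claim.

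The main obstacle is the first step, and I expect it to require making explicit a hypothesis the statement leaves implicit. Assumption~1 alone is not enough: a purely prognostic coordinate---one that moves the potential outcomes but not the treatment decision---would have matching arm-conditional laws yet stay dependent on $(Y(0),Y(1))$. To close this I would appeal to Proposition~1, driving the total correlation to zero so that $q_\phi(Z|X)=\prod_{k}q_\phi(Z^k|X)$ and the coordinates are conditionally independent given $X$; together with the structural fact that, apart from $X$, $Z^j$ connects to the outcome-encoding coordinates only along paths already blocked by $X$, a short case analysis (on whether $Z^j$ contributes to reconstructing the outcome-relevant part of $X$) then yields $Z^j\ind (Y(0),Y(1))|(W,X)$. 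Pinning down precisely which of these independences is a standing assumption of the latent-variable model versus a consequence of the disentanglement objective is where essentially all the care is needed; the cleanest argument I can see leans on exactly this combined hypothesis (full disentanglement plus no unmeasured confounding at the level of the generative model).
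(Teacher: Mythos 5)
First, a point of reference: the paper itself offers no proof of this proposition --- Appendix C proves only Proposition 1 (the total-correlation decomposition), and Proposition 2 is stated without argument --- so there is no ``paper's approach'' to compare yours against; your proposal has to stand on its own. On its own terms, the first gap is in your reduction of the left-hand side. You read $Dist(Z^j(w))$ as the $X$-conditional law $p(Z^j\mid X,W=w)$, which makes ``equal distributions'' synonymous with $Z^j\ind W\mid X$. But the quantity the paper actually matches (the MMD term between $q_\phi(Z^j\mid X(0))$ and $q_\phi(Z^j\mid X(1))$) is the aggregate posterior of each arm, i.e.\ the marginal law of $Z^j$ over the subpopulation $\{W=w\}$. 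Under that reading your forward direction breaks: even granting $Z^j\ind(Y(0),Y(1),W)\mid X$, the group marginals $\int p(Z^j\mid X)\,p(X\mid W=w)\,dX$ differ whenever the covariate distributions of the two arms differ, which is precisely the selection-bias regime the paper is concerned with. Worse, for an encoder that is a stochastic function of $X$ alone, $Z^j\ind(Y(0),Y(1),W)\mid X$ holds for \emph{every} dimension trivially --- including $Z^1=\hat W$, whose group marginals certainly differ --- so under the marginal reading the ``if'' direction is not merely unproved but false, and any correct proof must first commit to a nontrivial (generative) reading of $Z$ on both sides of the equivalence.

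The second gap you diagnose yourself but do not close: the reverse direction. Your contraction step needs $Z^j\ind(Y(0),Y(1))\mid(W,X)$ as an input, and you concede that Assumption 1 does not supply it --- your own ``purely prognostic coordinate'' is a counterexample to the biconditional as you have framed it. Importing full disentanglement (driving the total correlation to zero via Proposition 1) together with unstated structural assumptions about the generative graph is a change of hypotheses, not a derivation from the stated ones; and you explicitly leave open which of those added independences is an assumption versus a consequence. As written, the proposal is a candid analysis of why the statement is difficult (or false) as literally stated, with both directions resting on an interpretive choice and the substantive direction left as a plan rather than an argument; it is not a completed proof.
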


\subsubsection{Distribution Matching}
This proposition provides the ground to further regulate the remaining latent dimensions by minimizing the distribution difference.  
An intuitive way is to calculate the maximum mean discrepancy (MMD) for the distance on the space of probability measures, as it has an unbiased U-statistic estimator, which can be used
in conjunction with gradient descent-based methods \cite{tolstikhin2017wasserstein,tolstikhin2016mmd}.  Considering the complexity of potential distributions, we apply the kernel trick so that the MMD is zero if and only if the distributions are identical in the projected Hilbert space. Denote $q_{\phi}(Z^j|X(0))$ by $P^j_{-}$ and $q_{\phi}(Z^j|X(1))$ by $P^j_{+}$, the kernelized MMD metric can be expressed as:
\begin{equation}
    \begin{split}
    &\text{MMD}\Bigl(q_{\phi}\bigl(Z^j|X(0)\bigr),q_{\phi}\bigl(Z^j|X(1)\bigr)\Bigr) \\
    =& \ 
    \text{MMD}\bigl(P^j_{-},P^j_{+}\bigr)\\
    =& \ 
    \Bigl\|
    \int_Z k_Z(z,.)dP^j_{-}(z)  -
    \int_Z k_Z(z,.)dP^j_{+}(z)
    \Bigr\|_{\mathcal{H}_k}
    \end{split}
\end{equation}
where $k$ is an infinite-dimensional radial basis function (RBF) kernel and $\mathcal{H}_k$ is the corresponding reproducing kernel Hilbert space. Alternative distribution matching methods, such as linear MMD without kernelization, Wasserstein Distance and KL divergence are evaluated in treatment effect estimation in Sec. 5.5.

With balanced representation, two factors are inserted into the loss function for joint-optimization: the TC loss and the MMD loss:
\begin{equation}
    \begin{split}
    l_{\text{DB}}(\phi,\theta) = TC(Z^{1:d}|X) 
    + \gamma\sum_{j=4}^{d}\text{MMD}\Bigl(q_{\phi}\bigl(Z^j|X(0)\bigr),q_{\phi}\bigl(Z^j|X(1)\bigr)\Bigr)
    \end{split}
\end{equation}
where hyperparameter $\gamma$ is used to adjust the scales of the MMD loss to be similar to the TC loss.

\subsection{Label Balancing}
\label{sec:LB}
The scarcity of ECMO treatment resources and the clinical considerations in treatment assignment induce the selection bias in COVID-19 patients. The limited number of the treatment assignment leads to significant data imbalance (less than 3\% in ECMO datasets), thus an encoder network easily ignores the minority group, overfits the minority group or underfits the majority group in latent representation. 

To address this issue, we enrich the training data by augmenting the under-represented patients while maintaining their intrinsic characeristics. Instead of using extra data augmentation models \cite{sandfort2019gan,wang2020deepgenerative} to generate \textit{fake} patients, we utilize the generative power in our established latent space to sample more ECMO cases, as the latent representation of TVAE (or any VAE-based model) is a posterior distribution (as shown in Figure 1). Since the patients are sampled from the latent distribution of real ECMO cases, the generated "\textit{fake}" patients (constructed by passing the upsampled latent representations through decoder) do not change the distribution and characteristics of treatment group. Subsequently, the upsampled "\textit{fake}" data is concatenated with the original training data to form a more balanced inputs for model learning. 

For a well-balanced dataset, the loss function of TVAE is simply $
l_{\text{total}}(\phi,\theta) = l_{\text{VAE}}(\phi,\theta) + \alpha \cdot l_{\text{TI}}(\phi,\theta) + \beta \cdot l_{\text{DB}}(\phi,\theta)$, where hyperparameters $\alpha$ and $\beta$ are used to adjust each loss term to be in similar scales in the current dataset. Take the public IHDP dataset for example, we set $\alpha$ and $\beta$ to be 1 and 0.1, respectively. 
In the ECMO case where treatment cases are rare,  the label balancing module kicks in during the training iteration, hence the model input is the augmented dataset $X' \sim p_\theta(X'|Z')$ where $Z'$ is the up-sampled latent representations. The loss function can be expressed as:
\begin{equation}
    \begin{split}
    l_{\text{total}}(\phi,\theta) =& \sum_{X_i \in \mathcal{X}} -\mathbb{E}_{Z_i\sim q_{\phi}(Z_i|X_i) } \Bigl[\log p_{\theta} (X_i|Z_i) +
    \alpha\log p(W_i,Y_i|Z_i) \Bigr] \\
    &+ KL\bigl(q_{\phi}(Z|X') \| p(Z)\bigr) \\
    &+ \beta\cdot KL\Bigl(q_{{\phi}}\bigl(Z|X'\bigr)\big\|\prod_{j=1}^{d} q_{{\phi}}\bigl(Z^j|X'\bigr)\Bigr) \\
    &+ \beta\gamma\cdot\sum_{j=4}^{d}\text{MMD}\Bigl(q_{\phi}\bigl(Z^j|X'(0)\bigr),q_{\phi}\bigl(Z^j|X'(1)\bigr)\Bigr)
    \end{split}
\end{equation}
When the loss function converges, the latent encoding will not change by the added \textit{fake} patients, hence $ q_{\phi}(Z|X') =  q_{\phi}(Z|X)$.

\section{Experiments}
The first part of the experiments examine how TVAE performs under real ECMO settings. For each individual case, we look at the predicted treatment assignment and the factual response (mortality or survival) with the assigned treatment option. Since the factual  response may come from either treatment group or control group, the performance in response prediction suggests the overall performance in predicting both treatment and control (no-treatment) outcomes. 
To evaluate TVAE's performance in estimating treatment effect, we rely on the public synthesized datasets (where counterfactual outcomes are also available). Our experiments are designed to to answer the following questions: 
\begin{enumerate}
    \item Can TVAE predict the treatment assignment and factual responses? 
    \item How do the tailored components (DB and LB) contribute to the TVAE model? 
    \item How does TVAE perform in individual treatment effect estimation on synthesized datasets?     
\end{enumerate}

\begin{table}[t]
  \caption{Performance of Predicting Treatment Assignment and Factual Outcome (Response) on ISARIC Dataset.}
  \label{Table1}
  \centering
\fontsize{7.5pt}{9pt}\selectfont
  
  \begin{tabular}{lcccc}
    \toprule
    & \multicolumn{2}{c}{AUPRC}&\multicolumn{2}{c}{AUROC}\\
    \textsc{Model} & \textsc{Assignment} & \textsc{Response} &  \textsc{Assignment} & \textsc{Response}\\
    \midrule
    OLS & .1335 ± .0081  & .6225 ± .0027    & .8542 ± .0080  & .7149 ± .0013  \\
    KNN  & .1383 ± .0070  & .5693 ± .0025    & .5580 ± .0025  & .6743 ± .0018 \\
    BART & .2243 ± .0126  & .6571 ± .0126    & .9342 ± .0028  & .7489 ± .0013  \\
    CF  & .2748 ± .0152  & .6428 ± .0025    & .8865 ± .0072  & .7423 ± .0013  \\
    BNN  & N/A  & .5670 ± .0264    &  N/A &  .6890 ± .0181 \\
    DCNPD   & .2626 ± .0213  & .5438 ± .0201    & .9188 ± .0066  & .6208 ± .0101  \\
    TNet    & .0596 ± .0048  & .5931 ± .0075    & .8589 ± .0138  & .6983 ± .0077  \\
    SNet    & .0652 ± .0074  & .5948 ± .0023    & .8463± .0060  & .7043 ± .0020  \\
    TARNET    & .1350 ± .0089  & .6088 ± .0039    & .8650 ± .0039 & .7208 ± .0024  \\
    GANITE   & N/A  & .6524 ± .0209    & N/A  & .7488 ± .0125  \\
    CEVAE   & .2580 ± .0096  & .6415 ± .0193    & .9188 ± .0058  & .7293 ± .0109  \\
    Dragonnet    & .1439 ± .0121  & .6525 ± .0030    & .9069 ± .0060  & .7475 ± .0023  \\
    \midrule
    TVAE\textsuperscript{-DB} &  \underline{.2810 ± .0140} & \underline{.6669 ± .0031} &	\underline{.9306 ± .0040} &	\underline{.7605 ± .0008}
     \\
    TVAE\textsuperscript{-LB} &  .2226 ± .0185 & .6566 ± .0026 &	.9295 ± .0044 &	.7553 ± .0008
     \\      
    \textbf{TVAE}    & \textbf{.2970 ± .0195}  & \textbf{.6678 ± .0022}    & \textbf{.9431 ± .0032}  & \textbf{.7610 ± .0011}  \\
    \bottomrule
  \end{tabular}
\end{table}

\subsection{Data}
In this study, we constructed two real COVID-19 datasets from different continents. Data access agreement and IRB approval were acquired prior to the study, as shown in Appendix E, together with data processing pipeline, feature extraction methods, and characteristics of the cohort. Evaluating the individual treatment effect (ITE) estimation on the ISARIC and BJC ECMO data is impossible, since the ground truth of the counterfactual outcomes is not available in reality. Therefore, we also implement TVAE using the synthesized Infant Health and Development Program (IHDP) dataset, described in previous studies \cite{louizos2017cevae,chipman2010bart,wager2018cf,shalit2017cfrnet,shi2019adapting}. It consists of 1000 replications. In each replication, the dataset contains 747 subjects (139 treated and
608 control), represented by 25 covariates.

\paragraph{ISARIC Dataset}
The first ECMO dataset includes COVID-19 individuals from the International Severe Acute Respiratory and Emerging Infection Consortium (ISARIC)–World Health Organization (WHO) Clinical Characterisation Protocol (CCP), referred hereafter as ISARIC Data. Through international collaborative efforts, it covers 1651 hospitals across 63 countries from 26 January 2020 to 20 September 2021. We include a total of 118,801 patients who were admitted to an Intensive Care Unit (ICU) for at least 24 hours so that ECMO treatment is a feasible treatment option (hence the assumption of a positive probability for treatment assignment holds). Among these patients, 1,451 (1.22\%) received ECMO treatment. As the patients are from different hospitals with different treatment decision criteria, the characteristics in both treatment  and control groups are heterogenous. The mortality ratio is 40.00\% in treatment group and 50.56\% in control group. 

\paragraph{BJC Dataset}
The second ECMO dataset is a single institutional dataset, containing electronic health records (EHR) spanning 15 hospitals in Barnes Jewish HealthCare system. This dataset is referred hereafter as BJC Dataset. It contains COVID-19 patients admitted to ICU during 19 months (March 3rd 2020 - October 1st 2021). Among the total of 6,016 included patients, 134 (2.23\%) received ECMO treatment. As the patients are from the same healthcare system, the treatment assignment is made by a panel of clinical experts with consistent decision criteria. The mortality ratio in the treatment group is 47.01\% and in the control group is 18.99\%. Detailed data processing and feature extraction are provided in Appendix E.

\begin{figure}
    \centering
    \includegraphics[width=1.\linewidth]{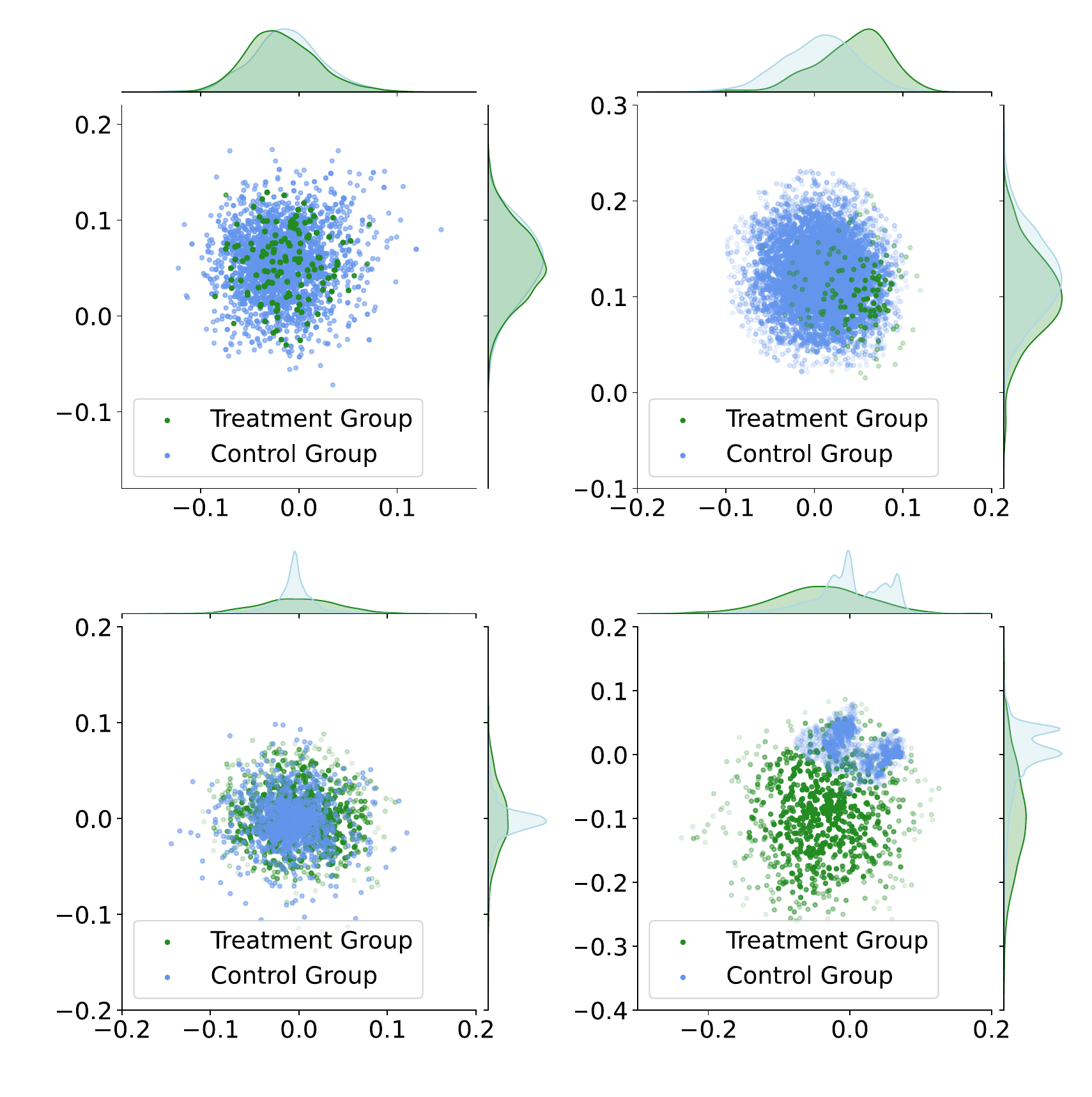}
    \captionsetup{labelformat = empty } 
    \caption{Figure 2: Latent Distributions of treatment and control groups in unsupervised dimensions with and without Distribution Balancing module. Left: with Distribution Balancing; Right: without Distribution Balancing; Top: BJC Dataset; bottom: ISARIC Dataset.}
    \label{fig:my_label}
\end{figure}
\begin{figure}
    \centering
\includegraphics[width=0.6\linewidth]{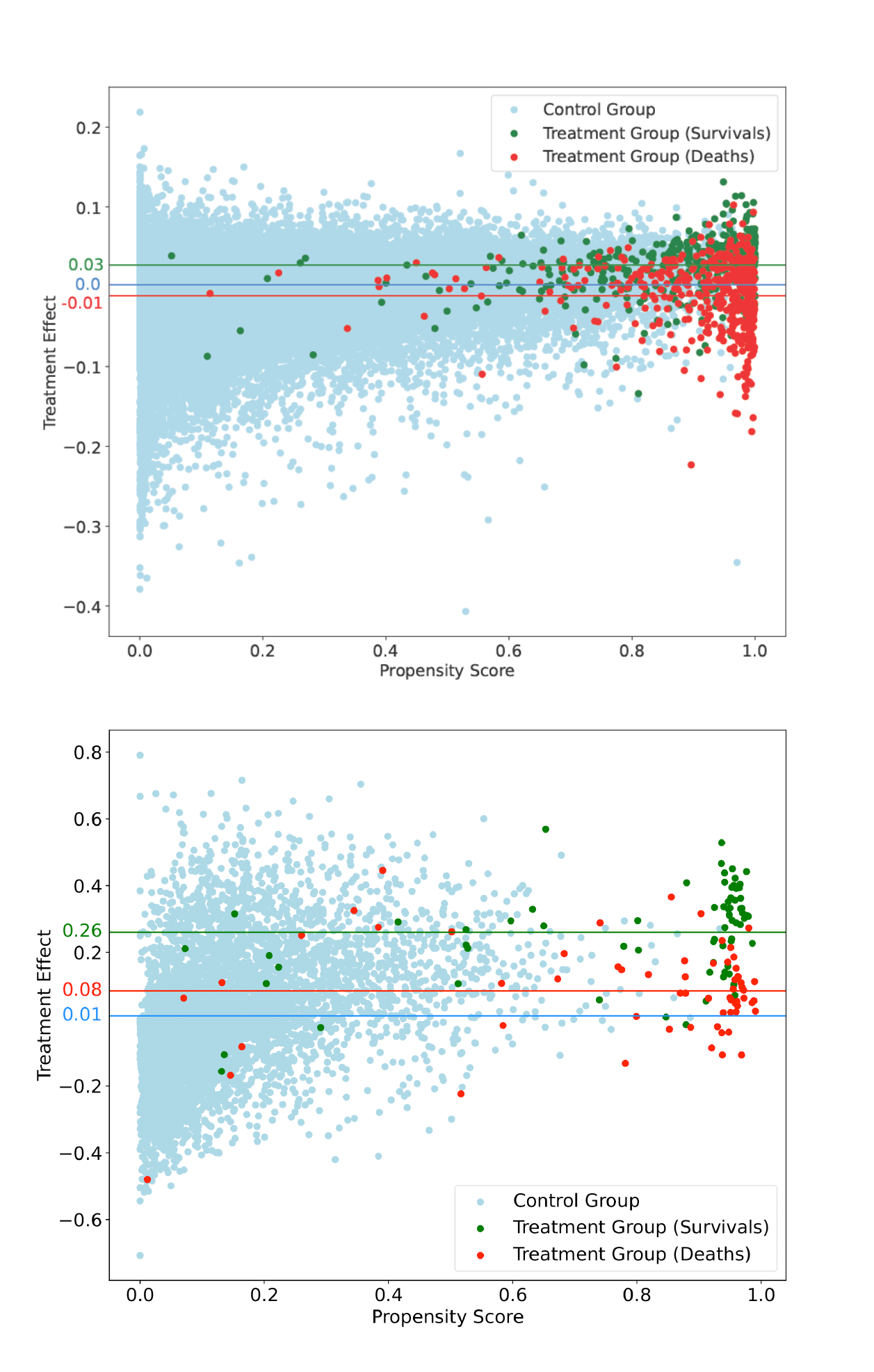}
\captionsetup{labelformat = empty } 
\caption{Figure 3: Risk stratification by propensity score (X axis) and treatment effects (Y axis) on control cases, died treatment cases and survived treatment cases. Horizon lines are the average treatment effects of each group. Top: ISARIC; bottom: BJC Dataset.}
    \label{fig:my_label}
\end{figure}

\subsection{Baseline Settings and Evaluation Metrics}
We implemented the state-of-the-art treatment effect algorithms that were discussed in the related works. This includes a linear ordinary least square model (OLS), a nonparametric k-Nearest Neighbor model (kNN), the tree-based causal inference models (BART \cite{chipman2010bart} and Causal Forest (CF) \cite{wager2018cf}), deep generative causal inference models (CEVAE and GANITE \cite{louizos2017cevae,yoon2018ganite}), and other deep representation causal inference models (DCN-PD, BNN, TNet, SNet, TARNET, and Dragonnet). \cite{alaa2017dcnpd,johansson2016bnn,yao2018site,shalit2017tarnet,shi2019adapting, curth2021st}.

Among all the models, we performed the grid-search of hyper-parameters. The final hyper-parameter settings are described in Appendix G. 
An ablation study is conducted to  investigate if the tailored components (Distribution Balancing and Label Balancing) help with ECMO prediction. This involves both quantitative evaluations by prediction performance and qualitative evaluation by visualizing the latent distributions. TVAE\textsuperscript{-DB} represents the model when distribution balancing is removed from the loss function, and TVAE\textsuperscript{-LB} represents the model when the minority cases are not up-sampled from the learned latent distribution. 

In ISARIC Dataset and BJC Dataset, both treatment assignment and factual outcomes are binary, therefore the area under the Receiver-Operating Characteristic curves (AUROC) is used to evaluate the predictive power of each model. Due to the scarcity of the treatment resources, we are interested in the trade-off of precision and recall, hence we also calculate the area under the Precision Recall curve (AUPRC). Quantitative results are reported with mean and standard error after 5-fold cross validation, where the stratification is random while preserving the treatment assignment ratio in each fold of patients.
For IHDP dataset, we followed the train/test split strategy with 1000 replications provided in the previous study \cite{louizos2017cevae}, and calculate the square root of the Precision in Estimation of Heterogeneous Effect (rPEHE), defined as:
$
    \epsilon_{rPEHE} = \sqrt{\frac{1}{N}\sum_{i=1}^{N} (\Tilde{Y_i(1)}-\Tilde{Y_i(0)} - (Y_i(1) -Y_i(0) ) )^2} 
$
where $\Tilde{Y_i(1)}$ and $\Tilde{Y_i(0)}$ are the estimated treatment and no-treatment outcomes, respectively. 

\begin{table}[t]
  \caption{Performance of Predicting Treatment Assignment and Factual Outcome (Response) on BJC Dataset.}
  \label{Table2}
  \centering
\fontsize{7.5pt}{9pt}\selectfont
  
  \begin{tabular}{lcccc}
    \toprule
    & \multicolumn{2}{c}{AUPRC}&\multicolumn{2}{c}{AUROC}\\
    \textsc{Model} & \textsc{Assignment} & \textsc{Response} &  \textsc{Assignment} & \textsc{Response}\\
    \midrule
    OLS & .6588 ± .0388  & .7530 ± .0114    & .9542 ± .0057  & .9079 ± .0054  \\
    KNN  & .5049 ± .0361  & .6819 ± .0065    & .7306 ± .0280  & .8567 ± .0057 \\
    BART & .5657 ± .0198  & .7314 ± .0125    & .9561 ± .0085  & .9013 ± .0047  \\
    CF  & .6624 ± .0265  & .7750 ± .0058    & .9447 ± .0112  & .9227 ± .0112  \\
    BNN  & N/A  & .7023 ± .0192    &  N/A & .8887 ± .0072 \\
    DCNPD   & .6136 ± .0295  & .6380 ± .0150    & .9374 ± .0165  & .8609 ± .0068  \\
    TNet    & .5403 ± .0641  & .7034 ± .0495    & .9046 ± .0169  & .8612 ± .0228  \\
    SNet    & .4977 ± .0411  & .6647 ± .0096    & .8574 ± .0162  & .8366 ± .0110  \\
    TARNET    & .5715 ± .0598  & .7192 ± .0100    & .9567 ± .0079 & .8814 ± .0070  \\
    Dragonnet    & .6522 ± .0394  & .7212 ± .0085    & .9551 ± .0100  & .8834 ± .0056  \\
    GANITE   & N/A  & .6991 ± .0256    & N/A  & .8930 ± .0152  \\
    CEVAE   & .7007 ± .0070  & .3937 ± .0104    & .9469 ± .0108  & .7349 ± .0062  \\
    \midrule
    TVAE\textsuperscript{-DB} &  \underline{.6857 ± .0976} & .7568 ± .0174 &	.9488 ± .0135 &	\underline{.9148 ± .0050}
     \\
    TVAE\textsuperscript{-LB} &  .6767 ± .0434 & \underline{.7672 ± .0087} &	\underline{.9509 ± .0108} &	.9127 ± .0059
    \\
    \textbf{TVAE}    & \textbf{.7344 ± .0357}  & \textbf{.7856 ± .0083}    & \textbf{.9582 ± .0123}  & \textbf{.9243 ± .0035}  \\
    \bottomrule
  \end{tabular}
\end{table}
\subsection{Quantitative and Qualitative Results in ECMO Prediction}
The quantitative results of prediction metrics are reported in Table 1 for ISARIC Dataset, and Table 2 for BJC Dataset. For models that do not (explicitly) predict propensity score, the corresponding fields are labeled as "N/A".
\subsubsection{Overall performance}
Our proposed TVAE outperforms all the baseline methods in ISARIC and BJC datasets by all metrics. Such observation has statistical significance, measured by 90\% confidence interval and paired one-tail t-test. 
\subsubsection{Model complexity v.s. over-fitting} 
On BJC Dataset where the cohort size of ECMO patients is significantly small (134 cases) and the input feature dimensions are relatively large (178 features), simple linear models such as OLS perform better than complex tree-based models and most of the deep-learning models. This is likely the result of over-fitting when complex models try to characterize scarce treatment cases. On the other hand, more complex models (such as BART, CF, GANITE, CEVAE and Dragonnet) outperform OLS on ISARIC Dataset, as it is significant larger and more heterogeneous. Armed with multiple novel regularization schemes (semi-supervised encoding, disentanglement and distribution matching), TVAE demonstrates its robustness in BJC cohort, meanwhile its deep learning architecture is capable of discovering the underlying nonlinear patterns in ISARIC cohort. 
\subsubsection{Tree-based learning v.s. deep representation learning}
Similar as observed in a previous study \cite{grinsztajn2022tabular,xue2021use}, tree-based models have high prediction performance in tabular data, but performance deteriorates when facing label imbalance. In the label-imbalanced treatment assignment prediction, CEVAE and DCNPD achieves similar or better AUROC/AURPC than tree-based models. In our TVAE, the carefully designed architecture improves its representation learning, achieving even more significant improvement over tree-based models in imbalanced problems (treatment assignment prediction) while matching the performance in factual prediction. 

\subsubsection{Ablation analysis.} 
We are interested in how the dedicated components affect the model performance. To quantitatively evaluate the contribution of Distribution Balancing, we remove the MMD loss and disentanglement, and reduce the weight of KL divergence (KL divergence helps disentanglement \cite{higgins2017beta}, but it must be kept for clustering effects). After the removal of the Distribution Balancing, all metrics dropped, and the decrease is more significant in the BJC Dataset, resulting in 6.6\% reduction in the AUPRC of treatment assignment prediction. This is consistent with the fact that the BJC Dataset has limited treatment samples, hence more prone to overfitting. To qualitatively visualize how Distribution Balancing affects the latent representations, in Fig. 2 we plot the direct visualizations of projected inputs in the latent space (as well as probability density distribution) of treatment and control groups in unsupervised dimensions, with and without Distribution Balancing. Each scatter point represents an ECMO/control case, and the transparency is proportional to the empirical probability density at this location. The empirical probability density is calculated through kernel density estimation (KDE). We used the 4th and 5th latent dimension, as the first three dimensions are for treatment assignment, factual and counter factual estimation, respectively. A clear divergence between treatment and control groups is observed after the removal of Distribution Balancing, suggesting the potential existence of selection bias in these dimensions or overfitting. 

To quantitatively evaluate the contribution of Label Balancing, we remove the upsampling from the training process. Without balancing the treatment/control groups, we observed 25.1\% and 7.9\% drops in the AUPRC of treatment assignment on ISARIC Dataset and BJC Dataset. To investigate the optimal upsampling ratio,  we vary the relative size of the upsampled ECMO cases and the optimal unsampled size is found to be 80\% (relative to the number of control cases) for ISARIC Dataset and 60\% for BJC Dataset, as can be seen in Appendix H. 
Note that the ablation of Treatment Joint Inference is not within the scope of this study, since removing it will totally change the architecture of the proposed work. 

\subsubsection{Risk stratification.} Given TVAE as a decision assistance tool, we want to visualize the consistency between risk predictions and observed outcomes. In Fig. 3, we plot the predicted propensity score, and the treatment effects (measured by the mortality risk with ECMO minus the mortality risk without ECMO). Via propensity scoring, TVAE separates cases that are more likely to be assigned treatment from cases that are not, and the stratification matches with the actual clinical decisions. By predicting the potential risk reduction by ECMO treatment, our model divides the treatment group into those who benefit more from ECMO and those who benefit less. The division is consistent with the actual death and survival outcomes.

\subsection{Individual Treatment Effect Estimation}

We first compare the treatment effect estimation between TVAE and the state-of-the-art algorithms. Recent literature has noted the inconsistency of results reported in existing literature, where the calculated metrics might be different, or sometimes from different replication strategies \cite{curth2021really}. To generate a fair comparison between all algorithms, we followed the train/test split strategy with 1000 replications provided in the original study \cite{louizos2017cevae}, and calculate the square root of the Precision in Estimation of Heterogeneous Effect (rPEHE) for all included state-of-the-art algorithms.
For reproducibility, the results of TVAE as well as existing algorithms are provided using Jupyter Notebook on Github: \url{https://github.com/xuebing1234/tvae}. As tabulated in Table 3, TVAE has significant improvement in estimating treatment effects over the state-of-the-art models. 

Since both the factual and counterfactual outcomes are available, we further evaluate how different Distribution Matching strategies (Kernelized MMD, MMD, Wasserstein Distance, and KL divergence) affect the treatment effect estimation. Among them, Kernelized MMD and and KL divergence lead to lowest estimation errors, indicating that they might be more suitable for distribution matching in batch learning. 
It is noteworthy that IHDP (and potentially any other synthesized datasets) does not possess the complexity in real ECMO scenario. For instance, the level of scarcity in the treatment assignment (pos/neg ratio 23\%) is >10 times higher than ECMO datasets, and it has much smaller selection bias (derandomization simply by removing non-white mothers) and simpler input space.

\begin{table}[t]
  \caption{IHDP Dataset Performance (*: Results reported in the original paper that used the same replications.).}
  \label{Table3}
  \centering
  \fontsize{8pt}{9pt}\selectfont
  \begin{tabular}{lc}
    \toprule
    \textsc{Model} & $\epsilon_{PEHE}$\\
    \midrule
    OLS1 & 7.59 ± 0.33   \\
    OLS2 & 2.33 ± 0.11   \\
    BART & 1.98 ± 0.09  \\
    Causal Forest & 4.18 ± 0.20 \\
    KNN  & 3.62 ± 0.15\\
    GANITE & 1.83 ± 0.01\\
    CEVAE*   & 2.60 ± 0.10\\
    BNN*  &  2.10 ± 0.10\\
    DCNPD   & 2.05 ± 0.03\\
    TNet    &   1.77 ± 0.05\\
    SNet    &   1.29 ± 0.04\\
    TARNET    &   1.24 ± 0.04\\
    Dragonnet    &   1.39 ± 0.05\\
    \midrule
    \textbf{TVAE} + \text{kMMD}    &   \textbf{1.18 ± 0.04}\\
    \textbf{TVAE} + \text{MMD} &   \underline{1.19 ± 0.04}\\
    \textbf{TVAE} + \text{Wasserstein} &   1.20 ± 0.04\\
    \textbf{TVAE} + \text{KL} &   \textbf{1.18 ± 0.04}\\
    
    \bottomrule
  \end{tabular}
\end{table}
\section{Conclusion and Broader Impact}
We aim to support the challenging treatment decisions on ECMO treatment, a scarcely available life-supporting modality for COVID-19 patients. We proposed a disentangled representation model that delivers the propensity score and potential outcomes through semi-supervised variational autoencoding. Several innovative components are proposed and integrated to address the strong selection bias, scarcity of treatment cases, and the curse of dimensionality in patient characterization. The Treatment Joint Inference eliminates the auxiliary networks while regulating factual and counterfactual predictions through input reconstruction, clustering and semi-supervision. The Distribution Balancing disentangles the latent dimensions into different aspects of information and extract the selection bias to aid propensity scoring. The remaining non-confounding dimensions are regulated by maximizing the distributions between treatment and control. The Label Balancing component mitigates data imbalance with generative latent representation while preserving the patient distributions. The experiments on two real-world COVID-19 datasets and the public synthetic dataset show that the model is robust in capturing the underlying decision-making process as well as individual treatment effects, hence outperforming other state-of-the-art algorithms.

\textbf{Potential Impact:}
This work fills the gap between treatment effect models and the critical need for ECMO clinical decision support (or other application domains that face strong selection bias, scarcity of treatment, and overfitting). To our best knowledge, there is no machine learning tool that can aid clinicians in identifying ECMO candidates from high-dimensional EHR data while weighing the risks of disease progression versus the scarcity of ECMO support. 
In fact, the decision is arguably the most complex decision made in the ICU setting. 

The improvement in predictions over state-of-art models leads to the better identification of treatment needs and resource allocation, hence saving lives. When using a machine learning assistive tool in ICU setting, we usually want the model to maximize sensitivity (true positive rate) while fixing a high specificity (true negative rate). Comparing TVAE with existing algorithms (for example, BART), when fixing a high specificity of 0.95, TVAE has a sensitivity of 0.84 (while BART has a sensitivity of 0.71) in the BJC Dataset, and 0.64 (while BART has 0.33) in ISARIC Dataset. Considering the capacity of ECMO resources, such performance improvement leads to 374 more patients (18 in BJC Dataset, and 356 in ISARIC) being correctly identified for ECMO treatment (see more details in Appendix F). Although it is impossible to compare treatment effect estimation without counterfactual outcomes, the improved AUROC and AUPRC in factual outcome prediction suggests potential improvement in characterizing the survival/death impact for each treatment decision.  

\textbf{Limitation:}
This work is not without limitations. In clinical practice, when the treatment decisions are not EHR-related or tracked by the EHR system, the assumption of No Unmeasured Confounding might be violated. Meanwhile, the evaluation of model performance on the counterfactual outcomes remains to be a challenge except using synthetic datasets. Due to the concerns in potential algorithm bias and robustness arose from the data collection, sample size, and underestimation of certain groups, the machine learning predictions should only serve as assistance to clinical decision making.  In health care settings, the treatment effect estimated by our model should be used as only one of the inputs to the clinicians alongside other clinical and ethical considerations. How to incorporate our treatment effect model in clinical decisions should be investigated in future studies.
\begin{acks}
The completion of this research project would not have been possible without the contributions and support of the ISARIC Clinical Characterisation Group (ORCID ID: 0009-0004-5601-9672), Pandemic Sciences Institute, University of Oxford. We are deeply grateful to all contributors who played a role in the success of this project. The full author list and funding information is provided in the permanent \href{https://docs.google.com/document/d/17pWtnRI251dDsaQPAb5SMFjHQgEkJ-Hi/edit?usp=sharing&ouid=114308118763539217087&rtpof=true&sd=true}{\underline{\textit{link}}}. 

    This work was supported by the Fullgraf Foundation, and the Big Ideas 2020 COVID Grant through the Healthcare Innovations Lab at the BJC Healthcare and Washington University in St. Louis School of Medicine. ASS has received research support from the Children’s Discovery Institute Faculty Development Award at Washington University in St. Louis.
\end{acks}
\bibliographystyle{ACM-Reference-Format}
\balance
\bibliography{bib}


\begin{thebibliography}{51}


\ifx \showCODEN    \undefined \def \showCODEN     #1{\unskip}     \fi
\ifx \showDOI      \undefined \def \showDOI       #1{#1}\fi
\ifx \showISBNx    \undefined \def \showISBNx     #1{\unskip}     \fi
\ifx \showISBNxiii \undefined \def \showISBNxiii  #1{\unskip}     \fi
\ifx \showISSN     \undefined \def \showISSN      #1{\unskip}     \fi
\ifx \showLCCN     \undefined \def \showLCCN      #1{\unskip}     \fi
\ifx \shownote     \undefined \def \shownote      #1{#1}          \fi
\ifx \showarticletitle \undefined \def \showarticletitle #1{#1}   \fi
\ifx \showURL      \undefined \def \showURL       {\relax}        \fi
\providecommand\bibfield[2]{#2}
\providecommand\bibinfo[2]{#2}
\providecommand\natexlab[1]{#1}
\providecommand\showeprint[2][]{arXiv:#2}

\bibitem[Abraham et~al\mbox{.}(2023)]%
        {abraham2023integrating}
\bibfield{author}{\bibinfo{person}{Joanna Abraham}, \bibinfo{person}{Brian
  Bartek}, \bibinfo{person}{Alicia Meng}, \bibinfo{person}{Christopher~Ryan
  King}, \bibinfo{person}{Bing Xue}, \bibinfo{person}{Chenyang Lu}, {and}
  \bibinfo{person}{Michael~S Avidan}.} \bibinfo{year}{2023}\natexlab{}.
\newblock \showarticletitle{Integrating machine learning predictions for
  perioperative risk management: Towards an empirical design of a
  flexible-standardized risk assessment tool}.
\newblock \bibinfo{journal}{\emph{Journal of Biomedical Informatics}}
  \bibinfo{volume}{137} (\bibinfo{year}{2023}), \bibinfo{pages}{104270}.
\newblock


\bibitem[Abrams et~al\mbox{.}(2020)]%
        {abrams2020ecmo}
\bibfield{author}{\bibinfo{person}{Darryl Abrams}, \bibinfo{person}{Roberto
  Lorusso}, \bibinfo{person}{Jean-Louis Vincent}, {and} \bibinfo{person}{Daniel
  Brodie}.} \bibinfo{year}{2020}\natexlab{}.
\newblock \bibinfo{title}{ECMO during the COVID-19 pandemic: when is it
  unjustified?}
\newblock , \bibinfo{numpages}{3}~pages.
\newblock


\bibitem[Alaa et~al\mbox{.}(2017)]%
        {alaa2017dcnpd}
\bibfield{author}{\bibinfo{person}{Ahmed~M Alaa}, \bibinfo{person}{Michael
  Weisz}, {and} \bibinfo{person}{Mihaela Van Der~Schaar}.}
  \bibinfo{year}{2017}\natexlab{}.
\newblock \showarticletitle{Deep counterfactual networks with
  propensity-dropout}.
\newblock \bibinfo{journal}{\emph{arXiv preprint arXiv:1706.05966}}
  (\bibinfo{year}{2017}).
\newblock


\bibitem[Bertini et~al\mbox{.}(2021)]%
        {bertini2021ecmo}
\bibfield{author}{\bibinfo{person}{Pietro Bertini}, \bibinfo{person}{Fabio
  Guarracino}, \bibinfo{person}{Marco Falcone}, \bibinfo{person}{Pasquale
  Nardelli}, \bibinfo{person}{Giovanni Landoni}, \bibinfo{person}{Matteo
  Nocci}, {and} \bibinfo{person}{Gianluca Paternoster}.}
  \bibinfo{year}{2021}\natexlab{}.
\newblock \showarticletitle{ECMO in COVID-19 patients: a systematic review and
  meta-analysis}.
\newblock \bibinfo{journal}{\emph{Journal of cardiothoracic and vascular
  anesthesia}} (\bibinfo{year}{2021}).
\newblock


\bibitem[Chen et~al\mbox{.}(2018)]%
        {chen2018isolating}
\bibfield{author}{\bibinfo{person}{Ricky~TQ Chen}, \bibinfo{person}{Xuechen
  Li}, \bibinfo{person}{Roger Grosse}, {and} \bibinfo{person}{David Duvenaud}.}
  \bibinfo{year}{2018}\natexlab{}.
\newblock \showarticletitle{Isolating sources of disentanglement in variational
  autoencoders}.
\newblock \bibinfo{journal}{\emph{arXiv preprint arXiv:1802.04942}}
  (\bibinfo{year}{2018}).
\newblock


\bibitem[Chipman et~al\mbox{.}(2010)]%
        {chipman2010bart}
\bibfield{author}{\bibinfo{person}{Hugh~A Chipman}, \bibinfo{person}{Edward~I
  George}, {and} \bibinfo{person}{Robert~E McCulloch}.}
  \bibinfo{year}{2010}\natexlab{}.
\newblock \showarticletitle{BART: Bayesian additive regression trees}.
\newblock \bibinfo{journal}{\emph{The Annals of Applied Statistics}}
  \bibinfo{volume}{4}, \bibinfo{number}{1} (\bibinfo{year}{2010}),
  \bibinfo{pages}{266--298}.
\newblock


\bibitem[Curth et~al\mbox{.}(2021)]%
        {curth2021really}
\bibfield{author}{\bibinfo{person}{Alicia Curth}, \bibinfo{person}{David
  Svensson}, \bibinfo{person}{Jim Weatherall}, {and} \bibinfo{person}{Mihaela
  van~der Schaar}.} \bibinfo{year}{2021}\natexlab{}.
\newblock \showarticletitle{Really doing great at estimating CATE? a critical
  look at ML benchmarking practices in treatment effect estimation}. In
  \bibinfo{booktitle}{\emph{Thirty-fifth Conference on Neural Information
  Processing Systems Datasets and Benchmarks Track (Round 2)}}.
\newblock


\bibitem[Curth and van~der Schaar(2021a)]%
        {curth2021st}
\bibfield{author}{\bibinfo{person}{Alicia Curth} {and} \bibinfo{person}{Mihaela
  van~der Schaar}.} \bibinfo{year}{2021}\natexlab{a}.
\newblock \showarticletitle{Nonparametric estimation of heterogeneous treatment
  effects: From theory to learning algorithms}. In
  \bibinfo{booktitle}{\emph{International Conference on Artificial Intelligence
  and Statistics}}. PMLR, \bibinfo{pages}{1810--1818}.
\newblock


\bibitem[Curth and van~der Schaar(2021b)]%
        {curth2021meta}
\bibfield{author}{\bibinfo{person}{Alicia Curth} {and} \bibinfo{person}{Mihaela
  van~der Schaar}.} \bibinfo{year}{2021}\natexlab{b}.
\newblock \showarticletitle{On inductive biases for heterogeneous treatment
  effect estimation}.
\newblock \bibinfo{journal}{\emph{Advances in Neural Information Processing
  Systems}}  \bibinfo{volume}{34} (\bibinfo{year}{2021}),
  \bibinfo{pages}{15883--15894}.
\newblock


\bibitem[Falcoz et~al\mbox{.}(2020)]%
        {falcoz2020extracorporeal}
\bibfield{author}{\bibinfo{person}{Pierre-Emmanuel Falcoz},
  \bibinfo{person}{Alexandra Monnier}, \bibinfo{person}{Marc Puyraveau},
  \bibinfo{person}{Stephanie Perrier}, \bibinfo{person}{Pierre-Olivier Ludes},
  \bibinfo{person}{Anne Olland}, \bibinfo{person}{Paul-Michel Mertes},
  \bibinfo{person}{Francis Schneider}, \bibinfo{person}{Julie Helms}, {and}
  \bibinfo{person}{Ferhat Meziani}.} \bibinfo{year}{2020}\natexlab{}.
\newblock \showarticletitle{Extracorporeal membrane oxygenation for critically
  ill patients with COVID-19--related acute respiratory distress syndrome:
  worth the effort?}
\newblock \bibinfo{journal}{\emph{American journal of respiratory and critical
  care medicine}} \bibinfo{volume}{202}, \bibinfo{number}{3}
  (\bibinfo{year}{2020}), \bibinfo{pages}{460--463}.
\newblock


\bibitem[Gannon et~al\mbox{.}(2022)]%
        {gannon2022association}
\bibfield{author}{\bibinfo{person}{Whitney~D Gannon}, \bibinfo{person}{John~W
  Stokes}, \bibinfo{person}{Sean~A Francois}, \bibinfo{person}{Yatrik~J Patel},
  \bibinfo{person}{Meredith~E Pugh}, \bibinfo{person}{Clayne Benson},
  \bibinfo{person}{Todd~W Rice}, \bibinfo{person}{Matthew Bacchetta},
  \bibinfo{person}{Matthew~W Semler}, {and} \bibinfo{person}{Jonathan~D
  Casey}.} \bibinfo{year}{2022}\natexlab{}.
\newblock \showarticletitle{Association Between Availability of ECMO and
  Mortality in COVID-19 Patients Eligible for ECMO: A Natural Experiment}.
\newblock \bibinfo{journal}{\emph{American Journal of Respiratory and Critical
  Care Medicine}} \bibinfo{number}{ja} (\bibinfo{year}{2022}).
\newblock


\bibitem[Grinsztajn et~al\mbox{.}(2022)]%
        {grinsztajn2022tabular}
\bibfield{author}{\bibinfo{person}{L{\'e}o Grinsztajn},
  \bibinfo{person}{Edouard Oyallon}, {and} \bibinfo{person}{Ga{\"e}l
  Varoquaux}.} \bibinfo{year}{2022}\natexlab{}.
\newblock \showarticletitle{Why do tree-based models still outperform deep
  learning on tabular data?}
\newblock \bibinfo{journal}{\emph{arXiv preprint arXiv:2207.08815}}
  (\bibinfo{year}{2022}).
\newblock


\bibitem[Hahn et~al\mbox{.}(2020)]%
        {hahn2020bayesian}
\bibfield{author}{\bibinfo{person}{P~Richard Hahn}, \bibinfo{person}{Jared~S
  Murray}, {and} \bibinfo{person}{Carlos~M Carvalho}.}
  \bibinfo{year}{2020}\natexlab{}.
\newblock \showarticletitle{Bayesian regression tree models for causal
  inference: Regularization, confounding, and heterogeneous effects (with
  discussion)}.
\newblock \bibinfo{journal}{\emph{Bayesian Analysis}} \bibinfo{volume}{15},
  \bibinfo{number}{3} (\bibinfo{year}{2020}), \bibinfo{pages}{965--1056}.
\newblock


\bibitem[Haiduc et~al\mbox{.}(2020)]%
        {haiduc2020role}
\bibfield{author}{\bibinfo{person}{Ana~Alina Haiduc}, \bibinfo{person}{Samiha
  Alom}, \bibinfo{person}{Naomi Melamed}, {and} \bibinfo{person}{Amer Harky}.}
  \bibinfo{year}{2020}\natexlab{}.
\newblock \showarticletitle{Role of extracorporeal membrane oxygenation in
  COVID-19: a systematic review}.
\newblock \bibinfo{journal}{\emph{Journal of Cardiac Surgery}}
  \bibinfo{volume}{35}, \bibinfo{number}{10} (\bibinfo{year}{2020}),
  \bibinfo{pages}{2679--2687}.
\newblock


\bibitem[Higgins et~al\mbox{.}(2017)]%
        {higgins2017beta}
\bibfield{author}{\bibinfo{person}{Irina Higgins}, \bibinfo{person}{Loic
  Matthey}, \bibinfo{person}{Arka Pal}, \bibinfo{person}{Christopher Burgess},
  \bibinfo{person}{Xavier Glorot}, \bibinfo{person}{Matthew Botvinick},
  \bibinfo{person}{Shakir Mohamed}, {and} \bibinfo{person}{Alexander
  Lerchner}.} \bibinfo{year}{2017}\natexlab{}.
\newblock \showarticletitle{beta-vae: Learning basic visual concepts with a
  constrained variational framework}. In
  \bibinfo{booktitle}{\emph{International conference on learning
  representations}}.
\newblock


\bibitem[Huang et~al\mbox{.}(2021)]%
        {huang2021role}
\bibfield{author}{\bibinfo{person}{Shiqian Huang}, \bibinfo{person}{Shuai
  Zhao}, \bibinfo{person}{Huilin Luo}, \bibinfo{person}{Zhouyang Wu},
  \bibinfo{person}{Jing Wu}, \bibinfo{person}{Haifa Xia}, {and}
  \bibinfo{person}{Xiangdong Chen}.} \bibinfo{year}{2021}\natexlab{}.
\newblock \showarticletitle{The role of extracorporeal membrane oxygenation in
  critically ill patients with COVID-19: a narrative review}.
\newblock \bibinfo{journal}{\emph{BMC pulmonary medicine}}
  \bibinfo{volume}{21}, \bibinfo{number}{1} (\bibinfo{year}{2021}),
  \bibinfo{pages}{1--13}.
\newblock


\bibitem[Jiao et~al\mbox{.}(2022)]%
        {jiao2022continuous}
\bibfield{author}{\bibinfo{person}{York Jiao}, \bibinfo{person}{Bing Xue},
  \bibinfo{person}{Chenyang Lu}, \bibinfo{person}{Michael~S Avidan}, {and}
  \bibinfo{person}{Thomas Kannampallil}.} \bibinfo{year}{2022}\natexlab{}.
\newblock \showarticletitle{Continuous real-time prediction of surgical case
  duration using a modular artificial neural network}.
\newblock \bibinfo{journal}{\emph{British journal of anaesthesia}}
  \bibinfo{volume}{128}, \bibinfo{number}{5} (\bibinfo{year}{2022}),
  \bibinfo{pages}{829--837}.
\newblock


\bibitem[Johansson et~al\mbox{.}(2016)]%
        {johansson2016bnn}
\bibfield{author}{\bibinfo{person}{Fredrik Johansson}, \bibinfo{person}{Uri
  Shalit}, {and} \bibinfo{person}{David Sontag}.}
  \bibinfo{year}{2016}\natexlab{}.
\newblock \showarticletitle{Learning representations for counterfactual
  inference}. In \bibinfo{booktitle}{\emph{International conference on machine
  learning}}. PMLR, \bibinfo{pages}{3020--3029}.
\newblock


\bibitem[Li et~al\mbox{.}(2022)]%
        {li2022self}
\bibfield{author}{\bibinfo{person}{Dingwen Li}, \bibinfo{person}{Bing Xue},
  \bibinfo{person}{Christopher King}, \bibinfo{person}{Bradley Fritz},
  \bibinfo{person}{Michael Avidan}, \bibinfo{person}{Joanna Abraham}, {and}
  \bibinfo{person}{Chenyang Lu}.} \bibinfo{year}{2022}\natexlab{}.
\newblock \showarticletitle{Self-explaining Hierarchical Model for
  Intraoperative Time Series}.
\newblock \bibinfo{journal}{\emph{arXiv preprint arXiv:2210.04417}}
  (\bibinfo{year}{2022}).
\newblock


\bibitem[Liu et~al\mbox{.}(2022)]%
        {liu2022predicting}
\bibfield{author}{\bibinfo{person}{Hanyang Liu}, \bibinfo{person}{Michael
  Montana}, \bibinfo{person}{Dingwen Li}, \bibinfo{person}{Chase Renfroe},
  \bibinfo{person}{Thomas Kannampallil}, {and} \bibinfo{person}{Chenyang Lu}.}
  \bibinfo{year}{2022}\natexlab{}.
\newblock \showarticletitle{Predicting Intraoperative Hypoxemia with Hybrid
  Inference Sequence Autoencoder Networks}. In
  \bibinfo{booktitle}{\emph{Proceedings of the 31st ACM International
  Conference on Information \& Knowledge Management}}.
  \bibinfo{pages}{1269--1278}.
\newblock


\bibitem[Louizos et~al\mbox{.}(2017)]%
        {louizos2017cevae}
\bibfield{author}{\bibinfo{person}{Christos Louizos}, \bibinfo{person}{Uri
  Shalit}, \bibinfo{person}{Joris~M Mooij}, \bibinfo{person}{David Sontag},
  \bibinfo{person}{Richard Zemel}, {and} \bibinfo{person}{Max Welling}.}
  \bibinfo{year}{2017}\natexlab{}.
\newblock \showarticletitle{Causal effect inference with deep latent-variable
  models}.
\newblock \bibinfo{journal}{\emph{Advances in neural information processing
  systems}}  \bibinfo{volume}{30} (\bibinfo{year}{2017}).
\newblock


\bibitem[Org(2022)]%
        {ELSO}
\bibfield{author}{\bibinfo{person}{ELSO Org}.} \bibinfo{year}{2022}\natexlab{}.
\newblock \bibinfo{booktitle}{\emph{ELSO COVID-19 Dashboard}}.
\newblock
\urldef\tempurl%
\url{https://www.elso.org/COVID19.aspx}
\showURL{%
\tempurl}


\bibitem[Organization et~al\mbox{.}(2021)]%
        {world2021covid}
\bibfield{author}{\bibinfo{person}{World~Health Organization} {et~al\mbox{.}}}
  \bibinfo{year}{2021}\natexlab{}.
\newblock \bibinfo{booktitle}{\emph{COVID-19 clinical management: living
  guidance, 25 January 2021}}.
\newblock \bibinfo{type}{{T}echnical {R}eport}. \bibinfo{institution}{World
  Health Organization}.
\newblock


\bibitem[Pappalardo et~al\mbox{.}(2013)]%
        {pappalardo2013predicting}
\bibfield{author}{\bibinfo{person}{Federico Pappalardo},
  \bibinfo{person}{Marina Pieri}, \bibinfo{person}{Teresa Greco},
  \bibinfo{person}{Nicol{\`o} Patroniti}, \bibinfo{person}{Antonio Pesenti},
  \bibinfo{person}{Antonio Arcadipane}, \bibinfo{person}{V~Marco Ranieri},
  \bibinfo{person}{Luciano Gattinoni}, \bibinfo{person}{Giovanni Landoni},
  \bibinfo{person}{Bernhard Holzgraefe}, {et~al\mbox{.}}}
  \bibinfo{year}{2013}\natexlab{}.
\newblock \showarticletitle{Predicting mortality risk in patients undergoing
  venovenous ECMO for ARDS due to influenza A (H1N1) pneumonia: the ECMOnet
  score}.
\newblock \bibinfo{journal}{\emph{Intensive care medicine}}
  \bibinfo{volume}{39}, \bibinfo{number}{2} (\bibinfo{year}{2013}),
  \bibinfo{pages}{275--281}.
\newblock


\bibitem[Rosenbaum and Rubin(1983)]%
        {rubin}
\bibfield{author}{\bibinfo{person}{Paul~R Rosenbaum} {and}
  \bibinfo{person}{Donald~B Rubin}.} \bibinfo{year}{1983}\natexlab{}.
\newblock \showarticletitle{The central role of the propensity score in
  observational studies for causal effects}.
\newblock \bibinfo{journal}{\emph{Biometrika}} \bibinfo{volume}{70},
  \bibinfo{number}{1} (\bibinfo{year}{1983}), \bibinfo{pages}{41--55}.
\newblock


\bibitem[Salyer et~al\mbox{.}(2021)]%
        {salyer2021first}
\bibfield{author}{\bibinfo{person}{Stephanie~J Salyer}, \bibinfo{person}{Justin
  Maeda}, \bibinfo{person}{Senga Sembuche}, \bibinfo{person}{Yenew Kebede},
  \bibinfo{person}{Akhona Tshangela}, \bibinfo{person}{Mohamed Moussif},
  \bibinfo{person}{Chikwe Ihekweazu}, \bibinfo{person}{Natalie Mayet},
  \bibinfo{person}{Ebba Abate}, \bibinfo{person}{Ahmed~Ogwell Ouma},
  {et~al\mbox{.}}} \bibinfo{year}{2021}\natexlab{}.
\newblock \showarticletitle{The first and second waves of the COVID-19 pandemic
  in Africa: a cross-sectional study}.
\newblock \bibinfo{journal}{\emph{The Lancet}} \bibinfo{volume}{397},
  \bibinfo{number}{10281} (\bibinfo{year}{2021}), \bibinfo{pages}{1265--1275}.
\newblock


\bibitem[Sandfort et~al\mbox{.}(2019)]%
        {sandfort2019gan}
\bibfield{author}{\bibinfo{person}{Veit Sandfort}, \bibinfo{person}{Ke Yan},
  \bibinfo{person}{Perry~J Pickhardt}, {and} \bibinfo{person}{Ronald~M
  Summers}.} \bibinfo{year}{2019}\natexlab{}.
\newblock \showarticletitle{Data augmentation using generative adversarial
  networks (CycleGAN) to improve generalizability in CT segmentation tasks}.
\newblock \bibinfo{journal}{\emph{Scientific reports}} \bibinfo{volume}{9},
  \bibinfo{number}{1} (\bibinfo{year}{2019}), \bibinfo{pages}{1--9}.
\newblock


\bibitem[Schmidt et~al\mbox{.}(2014)]%
        {schmidt2014predicting}
\bibfield{author}{\bibinfo{person}{Matthieu Schmidt}, \bibinfo{person}{Michael
  Bailey}, \bibinfo{person}{Jayne Sheldrake}, \bibinfo{person}{Carol Hodgson},
  \bibinfo{person}{Cecile Aubron}, \bibinfo{person}{Peter~T Rycus},
  \bibinfo{person}{Carlos Scheinkestel}, \bibinfo{person}{D~Jamie Cooper},
  \bibinfo{person}{Daniel Brodie}, \bibinfo{person}{Vincent Pellegrino},
  {et~al\mbox{.}}} \bibinfo{year}{2014}\natexlab{}.
\newblock \showarticletitle{Predicting survival after extracorporeal membrane
  oxygenation for severe acute respiratory failure. The Respiratory
  Extracorporeal Membrane Oxygenation Survival Prediction (RESP) score}.
\newblock \bibinfo{journal}{\emph{American journal of respiratory and critical
  care medicine}} \bibinfo{volume}{189}, \bibinfo{number}{11}
  (\bibinfo{year}{2014}), \bibinfo{pages}{1374--1382}.
\newblock


\bibitem[Schmidt et~al\mbox{.}(2013)]%
        {schmidt2013preserve}
\bibfield{author}{\bibinfo{person}{Matthieu Schmidt}, \bibinfo{person}{Elie
  Zogheib}, \bibinfo{person}{Hadrien Roz{\'e}}, \bibinfo{person}{Xavier
  Repesse}, \bibinfo{person}{Guillaume Lebreton},
  \bibinfo{person}{Charles-Edouard Luyt}, \bibinfo{person}{Jean-Louis
  Trouillet}, \bibinfo{person}{Nicolas Br{\'e}chot}, \bibinfo{person}{Ania
  Nieszkowska}, \bibinfo{person}{Herv{\'e} Dupont}, {et~al\mbox{.}}}
  \bibinfo{year}{2013}\natexlab{}.
\newblock \showarticletitle{The PRESERVE mortality risk score and analysis of
  long-term outcomes after extracorporeal membrane oxygenation for severe acute
  respiratory distress syndrome}.
\newblock \bibinfo{journal}{\emph{Intensive care medicine}}
  \bibinfo{volume}{39}, \bibinfo{number}{10} (\bibinfo{year}{2013}),
  \bibinfo{pages}{1704--1713}.
\newblock


\bibitem[Shah and Said(2021)]%
        {shah2021extracorporeal}
\bibfield{author}{\bibinfo{person}{Neel Shah} {and} \bibinfo{person}{Ahmed~S
  Said}.} \bibinfo{year}{2021}\natexlab{}.
\newblock \showarticletitle{Extracorporeal Support Prognostication—Time to
  Move the Goal Posts?}
\newblock \bibinfo{journal}{\emph{Membranes}} \bibinfo{volume}{11},
  \bibinfo{number}{7} (\bibinfo{year}{2021}), \bibinfo{pages}{537}.
\newblock


\bibitem[Shah et~al\mbox{.}(2023)]%
        {shah2023validation}
\bibfield{author}{\bibinfo{person}{Neel Shah}, \bibinfo{person}{Bing Xue},
  \bibinfo{person}{Ziqi Xu}, \bibinfo{person}{Hanqing Yang},
  \bibinfo{person}{Eva Marwali}, \bibinfo{person}{Heidi Dalton},
  \bibinfo{person}{Philip~PR Payne}, \bibinfo{person}{Chenyang Lu},
  \bibinfo{person}{Ahmed~S Said}, {and} \bibinfo{person}{ISARIC
  Clinical~Characterisation Group}.} \bibinfo{year}{2023}\natexlab{}.
\newblock \showarticletitle{Validation of extracorporeal membrane oxygenation
  mortality prediction \& severity of illness scores in an international
  COVID-19 cohort}.
\newblock \bibinfo{journal}{\emph{Artificial Organs}} (\bibinfo{year}{2023}).
\newblock


\bibitem[Shalit et~al\mbox{.}(2017a)]%
        {shalit2017cfrnet}
\bibfield{author}{\bibinfo{person}{Uri Shalit}, \bibinfo{person}{Fredrik~D
  Johansson}, {and} \bibinfo{person}{David Sontag}.}
  \bibinfo{year}{2017}\natexlab{a}.
\newblock \showarticletitle{Estimating individual treatment effect:
  generalization bounds and algorithms}. In
  \bibinfo{booktitle}{\emph{International Conference on Machine Learning}}.
  PMLR, \bibinfo{pages}{3076--3085}.
\newblock


\bibitem[Shalit et~al\mbox{.}(2017b)]%
        {shalit2017tarnet}
\bibfield{author}{\bibinfo{person}{Uri Shalit}, \bibinfo{person}{Fredrik~D
  Johansson}, {and} \bibinfo{person}{David Sontag}.}
  \bibinfo{year}{2017}\natexlab{b}.
\newblock \showarticletitle{Estimating individual treatment effect:
  generalization bounds and algorithms}. In
  \bibinfo{booktitle}{\emph{International Conference on Machine Learning}}.
  PMLR, \bibinfo{pages}{3076--3085}.
\newblock


\bibitem[Shekar et~al\mbox{.}(2020)]%
        {shekar2020extracorporeal}
\bibfield{author}{\bibinfo{person}{Kiran Shekar}, \bibinfo{person}{Jenelle
  Badulak}, \bibinfo{person}{Giles Peek}, \bibinfo{person}{Udo Boeken},
  \bibinfo{person}{Heidi~J Dalton}, \bibinfo{person}{Lovkesh Arora},
  \bibinfo{person}{Bishoy Zakhary}, \bibinfo{person}{Kollengode Ramanathan},
  \bibinfo{person}{Joanne Starr}, \bibinfo{person}{Bindu Akkanti},
  {et~al\mbox{.}}} \bibinfo{year}{2020}\natexlab{}.
\newblock \showarticletitle{Extracorporeal life support organization
  coronavirus disease 2019 interim guidelines: a consensus document from an
  international group of interdisciplinary extracorporeal membrane oxygenation
  providers}.
\newblock \bibinfo{journal}{\emph{Asaio Journal}} (\bibinfo{year}{2020}).
\newblock


\bibitem[Shi et~al\mbox{.}(2019)]%
        {shi2019adapting}
\bibfield{author}{\bibinfo{person}{Claudia Shi}, \bibinfo{person}{David Blei},
  {and} \bibinfo{person}{Victor Veitch}.} \bibinfo{year}{2019}\natexlab{}.
\newblock \showarticletitle{Adapting neural networks for the estimation of
  treatment effects}.
\newblock \bibinfo{journal}{\emph{Advances in neural information processing
  systems}}  \bibinfo{volume}{32} (\bibinfo{year}{2019}).
\newblock


\bibitem[Shi et~al\mbox{.}(2018)]%
        {shi2018obstructive}
\bibfield{author}{\bibinfo{person}{Wen Shi}, \bibinfo{person}{Bing Xue},
  \bibinfo{person}{Shuli Guo}, \bibinfo{person}{Daniel~YT Goh}, {and}
  \bibinfo{person}{Wee Ser}.} \bibinfo{year}{2018}\natexlab{}.
\newblock \showarticletitle{Obstructive Sleep Apnea Detection Using Difference
  in Feature and Modified Minimum Distance Classifier}. In
  \bibinfo{booktitle}{\emph{2018 40th Annual International Conference of the
  IEEE Engineering in Medicine and Biology Society (EMBC)}}. IEEE,
  \bibinfo{pages}{1--4}.
\newblock


\bibitem[Short et~al\mbox{.}(2022)]%
        {short2022extracorporeal}
\bibfield{author}{\bibinfo{person}{Briana Short}, \bibinfo{person}{Darryl
  Abrams}, {and} \bibinfo{person}{Daniel Brodie}.}
  \bibinfo{year}{2022}\natexlab{}.
\newblock \showarticletitle{Extracorporeal membrane oxygenation for coronavirus
  disease 2019-related acute respiratory distress syndrome}.
\newblock \bibinfo{journal}{\emph{Current opinion in critical care}}
  \bibinfo{volume}{28}, \bibinfo{number}{1} (\bibinfo{year}{2022}),
  \bibinfo{pages}{90}.
\newblock


\bibitem[Sohn et~al\mbox{.}(2015)]%
        {NIPS2015_8d55a249}
\bibfield{author}{\bibinfo{person}{Kihyuk Sohn}, \bibinfo{person}{Honglak Lee},
  {and} \bibinfo{person}{Xinchen Yan}.} \bibinfo{year}{2015}\natexlab{}.
\newblock \showarticletitle{Learning Structured Output Representation using
  Deep Conditional Generative Models}. In \bibinfo{booktitle}{\emph{Advances in
  Neural Information Processing Systems}},
  \bibfield{editor}{\bibinfo{person}{C.~Cortes}, \bibinfo{person}{N.~Lawrence},
  \bibinfo{person}{D.~Lee}, \bibinfo{person}{M.~Sugiyama}, {and}
  \bibinfo{person}{R.~Garnett}} (Eds.), Vol.~\bibinfo{volume}{28}.
  \bibinfo{publisher}{Curran Associates, Inc.}
\newblock
\urldef\tempurl%
\url{https://proceedings.neurips.cc/paper/2015/file/8d55a249e6baa5c06772297520da2051-Paper.pdf}
\showURL{%
\tempurl}


\bibitem[Spirtes(2009)]%
        {spirtes2009tutorial}
\bibfield{author}{\bibinfo{person}{Peter Spirtes}.}
  \bibinfo{year}{2009}\natexlab{}.
\newblock \showarticletitle{A Tutorial On Causal Inference}.
\newblock  (\bibinfo{year}{2009}).
\newblock


\bibitem[Tolstikhin et~al\mbox{.}(2017)]%
        {tolstikhin2017wasserstein}
\bibfield{author}{\bibinfo{person}{Ilya Tolstikhin}, \bibinfo{person}{Olivier
  Bousquet}, \bibinfo{person}{Sylvain Gelly}, {and} \bibinfo{person}{Bernhard
  Schoelkopf}.} \bibinfo{year}{2017}\natexlab{}.
\newblock \showarticletitle{Wasserstein auto-encoders}.
\newblock \bibinfo{journal}{\emph{arXiv preprint arXiv:1711.01558}}
  (\bibinfo{year}{2017}).
\newblock


\bibitem[Tolstikhin et~al\mbox{.}(2016)]%
        {tolstikhin2016mmd}
\bibfield{author}{\bibinfo{person}{Ilya~O Tolstikhin},
  \bibinfo{person}{Bharath~K Sriperumbudur}, {and} \bibinfo{person}{Bernhard
  Sch{\"o}lkopf}.} \bibinfo{year}{2016}\natexlab{}.
\newblock \showarticletitle{Minimax estimation of maximum mean discrepancy with
  radial kernels}.
\newblock \bibinfo{journal}{\emph{Advances in Neural Information Processing
  Systems}}  \bibinfo{volume}{29} (\bibinfo{year}{2016}).
\newblock


\bibitem[Wager and Athey(2018)]%
        {wager2018cf}
\bibfield{author}{\bibinfo{person}{Stefan Wager} {and} \bibinfo{person}{Susan
  Athey}.} \bibinfo{year}{2018}\natexlab{}.
\newblock \showarticletitle{Estimation and inference of heterogeneous treatment
  effects using random forests}.
\newblock \bibinfo{journal}{\emph{J. Amer. Statist. Assoc.}}
  \bibinfo{volume}{113}, \bibinfo{number}{523} (\bibinfo{year}{2018}),
  \bibinfo{pages}{1228--1242}.
\newblock


\bibitem[Wang et~al\mbox{.}(2020)]%
        {wang2020deepgenerative}
\bibfield{author}{\bibinfo{person}{Xinyue Wang}, \bibinfo{person}{Yilin Lyu},
  {and} \bibinfo{person}{Liping Jing}.} \bibinfo{year}{2020}\natexlab{}.
\newblock \showarticletitle{Deep generative model for robust imbalance
  classification}. In \bibinfo{booktitle}{\emph{Proceedings of the IEEE/CVF
  Conference on Computer Vision and Pattern Recognition}}.
  \bibinfo{pages}{14124--14133}.
\newblock


\bibitem[Wu and Fukumizu(2021)]%
        {wu2021intact}
\bibfield{author}{\bibinfo{person}{Pengzhou Wu} {and} \bibinfo{person}{Kenji
  Fukumizu}.} \bibinfo{year}{2021}\natexlab{}.
\newblock \showarticletitle{Intact-VAE: Estimating treatment effects under
  unobserved confounding}.
\newblock \bibinfo{journal}{\emph{arXiv preprint arXiv:2101.06662}}
  (\bibinfo{year}{2021}).
\newblock


\bibitem[Xue et~al\mbox{.}(2022a)]%
        {xuebingkdd}
\bibfield{author}{\bibinfo{person}{Bing Xue}, \bibinfo{person}{York Jiao},
  \bibinfo{person}{Thomas Kannampallil}, \bibinfo{person}{Bradley Fritz},
  \bibinfo{person}{Christopher~R King}, \bibinfo{person}{Joanna Abraham},
  \bibinfo{person}{Michael~S Avidan}, {and} \bibinfo{person}{Chenyang Lu}.}
  \bibinfo{year}{2022}\natexlab{a}.
\newblock \showarticletitle{Perioperative Predictions with Interpretable Latent
  Representation}. In \bibinfo{booktitle}{\emph{Proceedings of the 28th ACM
  SIGKDD Conference on Knowledge Discovery \& Data Mining}}.
\newblock


\bibitem[Xue et~al\mbox{.}(2021)]%
        {xue2021use}
\bibfield{author}{\bibinfo{person}{Bing Xue}, \bibinfo{person}{Dingwen Li},
  \bibinfo{person}{Chenyang Lu}, \bibinfo{person}{Christopher~R King},
  \bibinfo{person}{Troy Wildes}, \bibinfo{person}{Michael~S Avidan},
  \bibinfo{person}{Thomas Kannampallil}, {and} \bibinfo{person}{Joanna
  Abraham}.} \bibinfo{year}{2021}\natexlab{}.
\newblock \showarticletitle{Use of machine learning to develop and evaluate
  models using preoperative and intraoperative data to identify risks of
  postoperative complications}.
\newblock \bibinfo{journal}{\emph{JAMA network open}} \bibinfo{volume}{4},
  \bibinfo{number}{3} (\bibinfo{year}{2021}),
  \bibinfo{pages}{e212240--e212240}.
\newblock


\bibitem[Xue et~al\mbox{.}(2022b)]%
        {xue2022validation}
\bibfield{author}{\bibinfo{person}{Bing Xue}, \bibinfo{person}{Amy Licis},
  \bibinfo{person}{Jill Boyd}, \bibinfo{person}{Catherine~R Hoyt}, {and}
  \bibinfo{person}{Yo-El~S Ju}.} \bibinfo{year}{2022}\natexlab{b}.
\newblock \showarticletitle{Validation of actigraphy for sleep measurement in
  children with cerebral palsy}.
\newblock \bibinfo{journal}{\emph{Sleep medicine}}  \bibinfo{volume}{90}
  (\bibinfo{year}{2022}), \bibinfo{pages}{65--73}.
\newblock


\bibitem[Xue et~al\mbox{.}(2022c)]%
        {xue2022ecmo}
\bibfield{author}{\bibinfo{person}{Bing Xue}, \bibinfo{person}{Neel Shah},
  \bibinfo{person}{Hanqing Yang}, \bibinfo{person}{Thomas Kannampallil},
  \bibinfo{person}{Philip Richard~Orrin Payne}, \bibinfo{person}{Chenyang Lu},
  {and} \bibinfo{person}{Ahmed~Sameh Said}.} \bibinfo{year}{2022}\natexlab{c}.
\newblock \showarticletitle{Multi-horizon predictive models for guiding
  extracorporeal resource allocation in critically ill COVID-19 patients}.
\newblock \bibinfo{journal}{\emph{Journal of the American Medical Informatics
  Association}} (\bibinfo{year}{2022}).
\newblock


\bibitem[Xue et~al\mbox{.}(2022d)]%
        {xue2022distance}
\bibfield{author}{\bibinfo{person}{Bing Xue}, \bibinfo{person}{Wen Shi},
  \bibinfo{person}{Sanjay~H Chotirmall}, \bibinfo{person}{Vivian Ci~Ai Koh},
  \bibinfo{person}{Yi~Yang Ang}, \bibinfo{person}{Rex~Xiao Tan}, {and}
  \bibinfo{person}{Wee Ser}.} \bibinfo{year}{2022}\natexlab{d}.
\newblock \showarticletitle{Distance-Based Detection of Cough, Wheeze, and
  Breath Sounds on Wearable Devices}.
\newblock \bibinfo{journal}{\emph{Sensors}} \bibinfo{volume}{22},
  \bibinfo{number}{6} (\bibinfo{year}{2022}), \bibinfo{pages}{2167}.
\newblock


\bibitem[Yao et~al\mbox{.}(2018)]%
        {yao2018site}
\bibfield{author}{\bibinfo{person}{Liuyi Yao}, \bibinfo{person}{Sheng Li},
  \bibinfo{person}{Yaliang Li}, \bibinfo{person}{Mengdi Huai},
  \bibinfo{person}{Jing Gao}, {and} \bibinfo{person}{Aidong Zhang}.}
  \bibinfo{year}{2018}\natexlab{}.
\newblock \showarticletitle{Representation learning for treatment effect
  estimation from observational data}.
\newblock \bibinfo{journal}{\emph{Advances in Neural Information Processing
  Systems}}  \bibinfo{volume}{31} (\bibinfo{year}{2018}).
\newblock


\bibitem[Yoon et~al\mbox{.}(2018)]%
        {yoon2018ganite}
\bibfield{author}{\bibinfo{person}{Jinsung Yoon}, \bibinfo{person}{James
  Jordon}, {and} \bibinfo{person}{Mihaela Van Der~Schaar}.}
  \bibinfo{year}{2018}\natexlab{}.
\newblock \showarticletitle{GANITE: Estimation of individualized treatment
  effects using generative adversarial nets}. In
  \bibinfo{booktitle}{\emph{International Conference on Learning
  Representations}}.
\newblock


\end{thebibliography}
\appendix
\section{Discussion on assumptions}
The decision to initiate ECMO is almost always the most complex decision to make. ECMO is the most resource intensive therapy provided in the ICU and is associated with significant morbidities, in addition there are no universally accepted tools to identify patients at highest risk of receiving ECMO or universally accepted criteria for ECMO initiation. It was thus important to first identify patients not eligible for ECMO (by either BMI or age). From an inclusion perspective it was then important to include all the clinical, laboratory and therapeutic variables that influence ECMO decision making as no single variable can solely identify patients who might or might not receive ECMO. For example, a patient’s respiratory rate if high could represent severe respiratory distress and failure that could contribute to the decision to provide ECMO. A normal respiratory rate for a patient who is endotracheally intubated, on mechanical ventilatory support and under neuromuscular blockade but without satisfactory evidence of adequate gas exchange could also be considered a reason to consider ECMO support. Additionally, a patient in severe respiratory failure with subsequent hypercarbia can become hypopneic with low respiratory rate could be at risk of impending cardio-respiratory arrest and thus could be a candidate for urgent ECMO support.

\subsection{Discussion on Assumption 1}
The no unmeasured confounding assumption comes from the fact that an ICU patient is continuously measured in various aspects that might be relevant to treatment assignment and potential treatment outcomes, and clinicians rely on these measures to make reasonable treatment decisions. We assume that clinicians have taken necessary measurements that reflect all confounding information. As all such measurements/tests are captured in the Electronic Health Records, we believe the dataset has included all confounding variables. Take Institutional Data for example, we collect 52,216 measurements only from the flowsheets table, despite other tables such as demographics, comorbidities, lab tests, ventilation settings, etc. However, in the actual implementation, highly missing measures/tests are excluded from the model inputs, which remains to be investigated if this incurs any no unmeasured confounding. This has been included in the discussion of limitations in Sec. 6. 

\subsection{Discussion on Assumption 2}
For the rigorous analysis, we pre-exclude definite cases with zero probability (for example, any ICU patients with age>80 is considered unsuitable for ECMO treatment). Since there are no clinical criteria that an ICU patient ‘must’ be on ECMO, probability of each invidual treatment assignment is less than 1. In the resulting dataset, the probability of treatment decision is always in the range of (0,1). 

\section{Identifiability of Treatment Outcomes}
To predict the factual and counterfactual outcomes (hence the individual treatment effect), we need to show that $p(Y|X,\textit{do}(W=1))$ in TVAE is identifiable. From the Identification Theorem \cite{louizos2017cevae}, we can see that: 
\begin{align*}
    p(Y|X,\textit{do}(W=1))&=\int_{Z} p(Y|X,\textit{do}(W=1),Z)p(Z|X,\textit{do}(W=1))dZ\\
    &\stackrel{i}{=}\int_{Z} p(Y|X,W=1,Z)p(Z|X,W=1)dZ\\
    &\stackrel{j}{=}\int_{Z^1} p(Y|X,W=1,Z^1)p(Z^1|X,W=1)dZ^1\\
    &\stackrel{k}{=}\int_{Z^1} p(Y|W=1,Z^1)p(Z^1|X,W=1)dZ^1
\end{align*}
where equality $(i)$ is by the rules of \textit{do}-calculus, equality $(j)$ is by the designated latent dimension in TVAE, and equality $(k)$ comes from the property of VAE that $Y$ is independent of $X$ given $Z$. To ensure that $Y$ is only expressed in the designated latent dimension, disentanglement is further added to TVAE (see Sec 4.3). The case of $p(Y|X,\textit{do}(W=0))$ is identical, hence the defined individual treatment effect can be recovered.  
\section{Proof of Proposition 1}
\begin{proposition*}
    Given $Z^{1:d}:=\{Z^1, Z^2, ..., Z^d\}$ and $Z^1 = \hat{W}$, conditioned on the data $X$, the total correlation of $Z^{1:d}$ equals the sum of the total correlation of $Z^{2:d}$ and the mutual information between the first dimension $\hat{W}$ and all the other dimensions $Z^{2:d}$, i.e., 
\begin{equation}
        TC(Z^{1:d}|X) = TC(Z^{2:d}|X) 
    + I(\hat{W}|X;Z^{2:d}|X)
\end{equation}
where $I(A;B)$ is the mutual information between variable $A$ and $B$.
\end{proposition*}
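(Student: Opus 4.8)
The plan is to reduce the claimed identity to the standard chain rule for total correlation (multi-information), specialized to the conditional law $q_\phi(\cdot\mid X)$. First I would unfold the definition of the total correlation as an expectation of a log-density ratio under the variational posterior,
\begin{equation*}
TC(Z^{1:d}\mid X) = \mathbb{E}_{q_\phi(Z\mid X)}\Bigl[\log \frac{q_\phi(Z\mid X)}{\prod_{j=1}^{d} q_\phi(Z^j\mid X)}\Bigr],
\end{equation*}
where $q_\phi(Z^j\mid X)$ denotes the $j$-th marginal of the encoder posterior. Note that the hypothesis $Z^1=\hat W$ plays only a notational role: it lets us write $I(\hat W\mid X;Z^{2:d}\mid X)$ in place of $I(Z^1;Z^{2:d}\mid X)$, and the algebraic identity itself holds for an arbitrary split of the coordinates.

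The key step is a telescoping factorization of the integrand. I would insert the joint marginal $q_\phi(Z^{2:d}\mid X)$ of the last $d-1$ coordinates into numerator and denominator, writing
\begin{equation*}
\frac{q_\phi(Z\mid X)}{\prod_{j=1}^{d} q_\phi(Z^j\mid X)}
= \underbrace{\frac{q_\phi(Z^1,Z^{2:d}\mid X)}{q_\phi(Z^1\mid X)\,q_\phi(Z^{2:d}\mid X)}}_{(\star)}
\cdot \underbrace{\frac{q_\phi(Z^{2:d}\mid X)}{\prod_{j=2}^{d} q_\phi(Z^j\mid X)}}_{(\star\star)}.
\end{equation*}
Taking $\log$ and then expectation under $q_\phi(Z\mid X)$ splits the expression into a sum: the expectation of $\log(\star)$ is, by definition of conditional mutual information as a KL divergence between the joint and the product of marginals, exactly $I(\hat W\mid X;Z^{2:d}\mid X)$; and the expectation of $\log(\star\star)$ depends only on $Z^{2:d}$, so marginalizing out $Z^1$ shows it equals $TC(Z^{2:d}\mid X)$. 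Adding the two pieces yields the claim.

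For maximum transparency I would actually present the equivalent entropy computation: using $TC(Z^{1:d}\mid X)=\sum_{j=1}^{d} H(Z^j\mid X)-H(Z^{1:d}\mid X)$, the analogous expansion of $TC(Z^{2:d}\mid X)$, and $I(Z^1;Z^{2:d}\mid X)=H(Z^1\mid X)+H(Z^{2:d}\mid X)-H(Z^{1:d}\mid X)$, the right-hand side telescopes to the left-hand side after the $\pm H(Z^{2:d}\mid X)$ terms cancel. The only point needing a word of care is well-definedness, since the $Z^j$ are continuous and these are differential entropies that could a priori be infinite; I would either invoke the fact that in TVAE the encoder outputs Gaussian posteriors with finite differential entropy, or stay entirely within the KL/expectation formulation of the first argument, where every term is a manifestly nonnegative KL divergence and no subtraction of possibly-infinite quantities is performed. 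I do not expect a genuine obstacle here — the entire content is the factorization $(\star)\cdot(\star\star)$ and the rest is bookkeeping — so the ``hard part'' is merely stating the integrability hypothesis cleanly enough that the entropy cancellations are rigorous.
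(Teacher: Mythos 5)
Your proof is correct and takes essentially the same route as the paper's Appendix C: both rest on the single telescoping factorization of the log-density ratio into the conditional mutual information term $I(Z^1;Z^{2:d}\mid X)$ and the total correlation $TC(Z^{2:d}\mid X)$, with everything else being bookkeeping. (If anything, your displayed factorization is the cleaner one --- the paper's intermediate step carries a spurious extra $q_{\phi}(Z^1\mid X)$ in the denominator of its first log term, which only cancels because of the final line it is meant to justify --- and your remark about staying within the KL formulation to avoid possibly-infinite differential entropies is a reasonable precaution the paper does not bother to state.)
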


\begin{proof}
\begin{equation}
\begin{split}
    LHS &=\mathbb{E}_{q_{\phi}(Z^{1:d}|X)} \bigl[ \log \frac{q_{\phi}(Z^{1:d}|X)}{q_{\phi}(Z^1|X)q_{\phi}(Z^2|X)...q_{\phi}(Z^d|X)}  \bigl] \\
    &= \mathbb{E}_{q_{\phi}(Z^{1:d}|X)} 
    \Bigl[ \log \bigl(\frac{q_{\phi}(Z^{2:d}|X)}
    {q_{\phi}(Z^1|X)q_{\phi}(Z^2|X)...q_{\phi}(Z^d|X)}\bigr) \\
    &+ \log \bigl(\frac{q_{\phi}(Z^{1:d}|X)}
    {q_{\phi}(Z^{2:d}|X)q_{\phi}(Z^1|X)}\bigr)\Bigr] \\
    &= TC(Z^{2:d}|X) 
    + I(Z^1|X;Z^{2:d}|X) \\
    &= RHS 
\end{split}
\end{equation}
\end{proof}

\section{Predicted risks of control and treatment cases}
The predicted distribution of ECMO cases and control cases in terms of treatment effect and control effect are plotted in Figure S1. Patients positioned to the right have higher mortality risk even without ECMO treatment, and patients in the upper regions have higher mortality risks after ECMO treatment. From the figure we can see that 1): Death ECMO cases have higher predicted treatment and control mortality risks than survival cases, and 2): actual treatment cases do not have the highest predicted mortality risks (either treatment or no-treatment).  
For 1), this is consistent with our intuition. First, the deaths after treatment demonstrate their high mortality risk even after ECMO. Second, the deaths imply their more severe symptoms before treatment decision, hence they are associated with higher mortality risks without ECMO treatment.  For 2), the actual clinical decision is not the reduction in mortality risks, but the changes in outcome. For instance, a reduction of death probability from 30\% to 0 will not justify the ECMO treatment when comparing to the reduction of death probability from 65\% to 35\%, since the latter is more likely to result in a change of outcome (changing the patient from death to survival). As a result, the actual ECMO assignment are not picking the cases with highest mortality risk (more likely to die regardless of treatment or not), but rather the cases that are more likely to be overturned. 


\begin{figure}[h]
    \centering
    \includegraphics[width=0.7\linewidth]{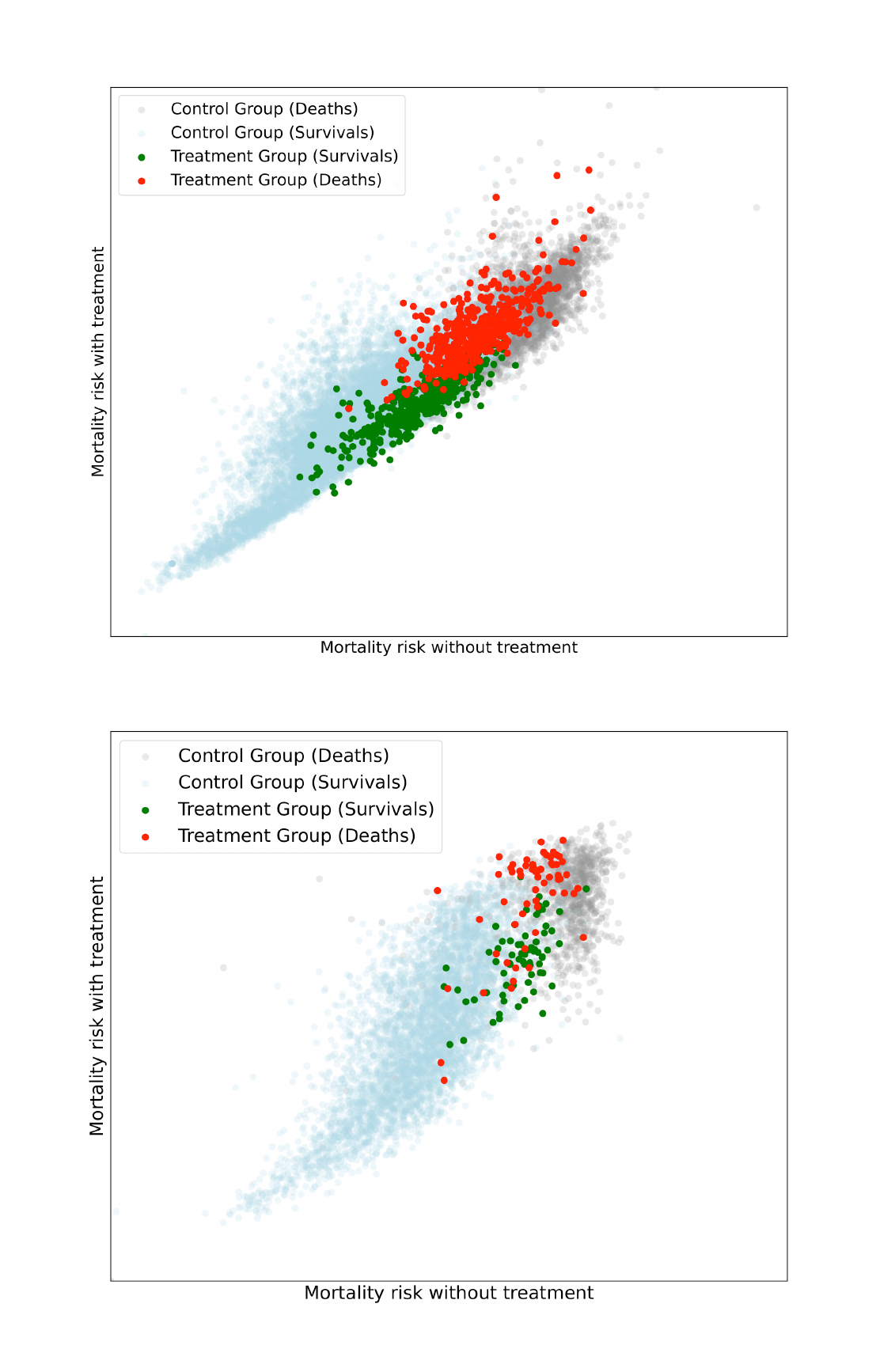}
\captionsetup{labelformat = empty } 
\caption{Figure S1: Distribution of treatment and no-treatment mortality risks in control (deaths), control (survivals), treatment (death) and treatment (survivals) groups. Left: ISARIC; Right: BJC Dataset}
\end{figure}
\section{data access agreement, preprocessing and cohort characteristics}
Data access agreement with International Severe Acute Respiratory and emerging Infections
Consortium (ISARIC) were signed on Dec 18, 2020. The purpose of access is to contribute to execute an analysis of \textbf{"Development of predictive
analytics model for need of extracorporeal support in COVID-19"}. 

For the BJC dataset, the IRB (\#202011004) titled \textbf{"Identifying predictors for ECMO need in COVID 19 patients"} was approved on Nov 2, 2020. 

Detailed data description and processing pipeline and characteristics of cohort are summarized. For ISARIC, the summary is provided here: https://tinyurl.com/mrv8rmp3. For the institutional data, the summary is provided here: https://tinyurl.com/yxersh7d.
\section{More metrics between TVAE and BART}

\begin{table}[tbh!]
  \caption{More point metrics between TVAE and BART}
  \label{TableS1}
  \centering
  \fontsize{7.5pt}{9pt}\selectfont
  \begin{tabular}{lccccc}
    \toprule
    \textsc{Model} & \textsc{Sensitivity}  &\textsc{Specificity}   & \textsc{Precision}  &\textsc{Accuracy} 
    &\textsc{F1} 
    \\
    \midrule
    \multicolumn{6}{c}{Institutional Dataset}\\
    \midrule 
    BART & .7100 & .9531  & .2563& .9476 &.3752   \\
    TVAE & .8356 & .9517& .2887 & .9491 &.4275   \\
    \midrule
    \multicolumn{6}{c}{ISARIC Dataset}\\
    \midrule
    BART & .3330 & .9499 & .0609 & .9439 & .1029 \\
    TVAE  & .6353 & .9506 &.1114  &.9476  &.1895 \\
    \bottomrule
  \end{tabular}
\end{table}
\section{Parameters of baselines}
The summary of hyper-parameters used in ECMO datasets are listed here: https://tinyurl.com/433yhjzh The hyper-parameters are tuned through grid-search. For IHDP, we use the default hyper-parameters in the associated Github code repository, and uploaded the experiment results here: https://github.com/xuebing1234/tvae
\section{Optimal Sampling strategy}
\begin{figure}[h]
\captionsetup{labelformat=empty }
 \captionsetup{width=.4\columnwidth}
\allowbreak
\begin{minipage}[t]{.65\columnwidth}
    \centering
    \includegraphics[width=1\columnwidth]{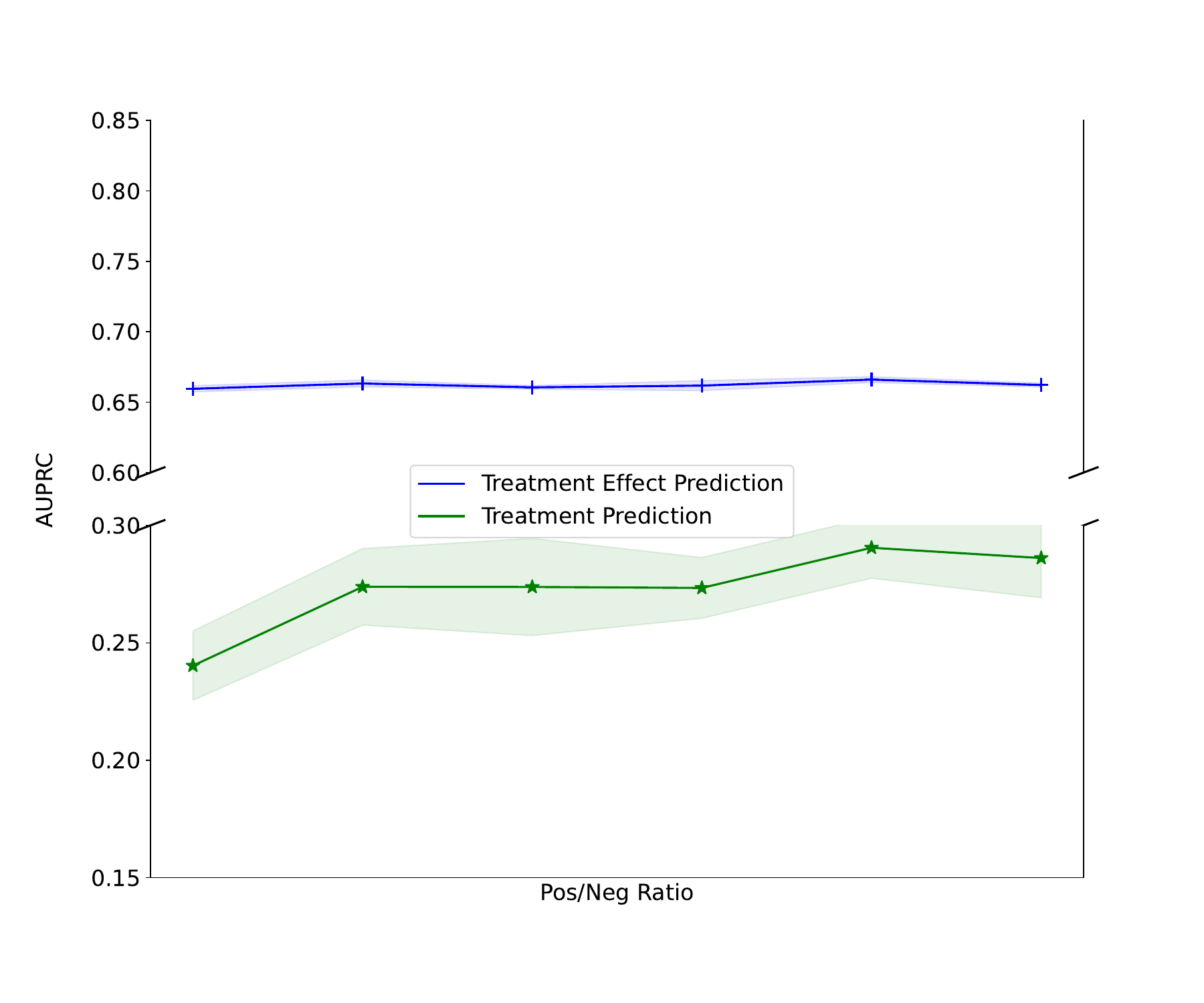}
\end{minipage}
\par\vspace{-1\baselineskip}
\begin{minipage}[t]{.65\columnwidth}
    \centering
    \includegraphics[width=1\columnwidth]{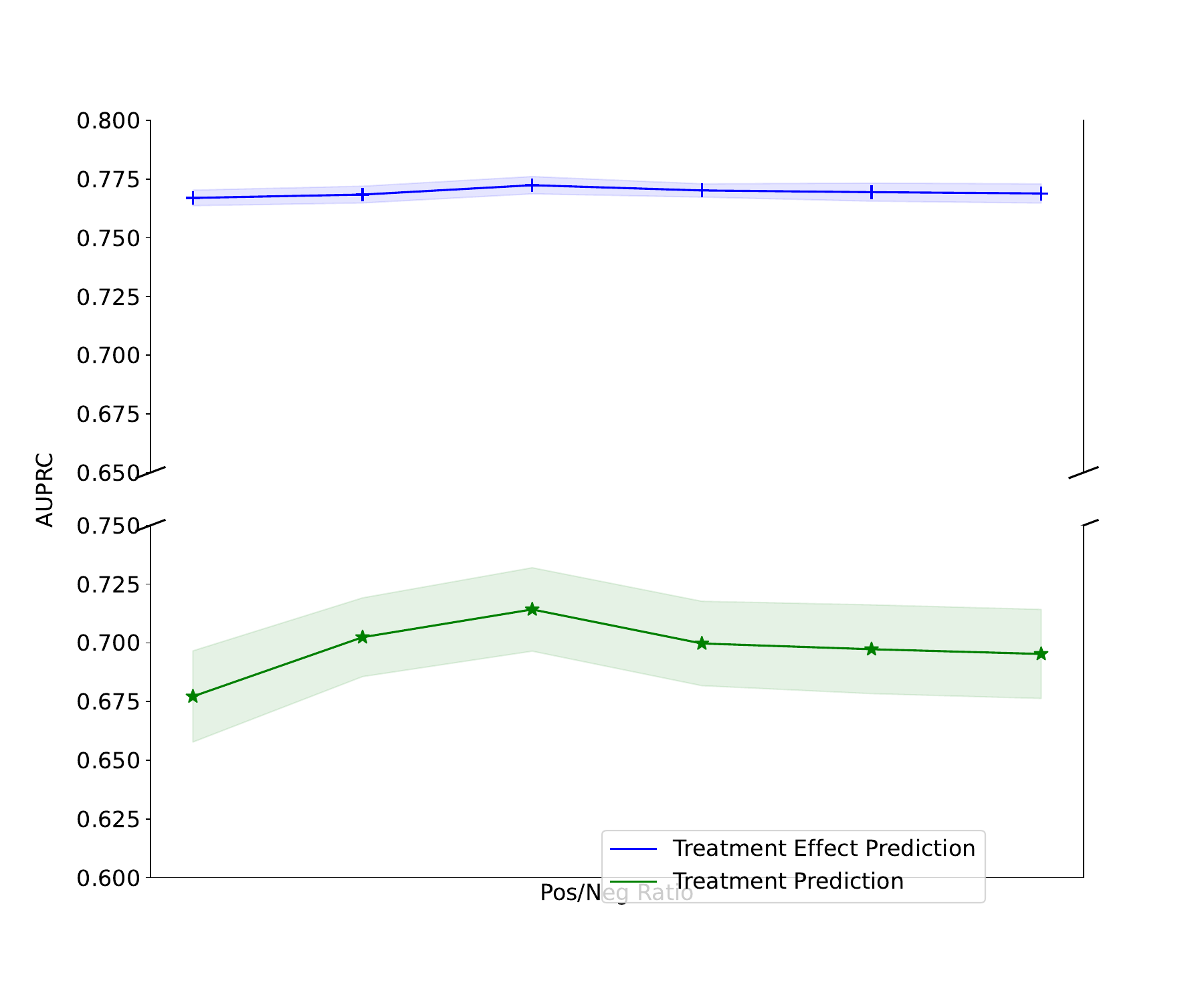}
\end{minipage}
\captionsetup{width=1\linewidth}
\captionsetup{labelformat = empty } 
\caption{Figure S2: Prediction performance (measured by AUPRC). Upper: ISARIC Data; Lower: BJC Data.}
\end{figure}
\clearpage

\end{document}